\documentclass{article}
\usepackage{authblk}

\usepackage{chngcntr}

\usepackage{natbib}
\usepackage[utf8]{inputenc}
\usepackage{amsmath,amssymb}
\usepackage{graphicx}
\usepackage[utf8]{inputenc}
\usepackage[english]{babel}
\usepackage[toc,page]{appendix}
\usepackage[colorlinks=TRUE, linkcolor=blue, urlcolor=blue, citecolor=blue]{hyperref}
\usepackage[paper=portrait,pagesize]{typearea}
\graphicspath{ {./images/} }
\usepackage[margin=25mm]{geometry}
\usepackage[nocenter]{qtree}
\usepackage{tikz}
\usetikzlibrary{arrows}
\usepackage{breqn}
\usepackage{multirow}
\usepackage{floatrow}
\usepackage{amsthm,bm}
\usepackage[belowskip=-9pt,aboveskip=0pt]{caption}
\usepackage{graphicx}
\usepackage{float}
\usepackage[toc,page]{appendix}
\usepackage{tikz}
\usepackage[capitalise]{cleveref}
\usepackage{booktabs}
\usepackage{enumitem}

\usetikzlibrary{positioning, arrows.meta}
\usepackage{float}
\floatstyle{plaintop}
\restylefloat{table}
\usepackage{graphicx}
\usepackage{subcaption}
\newtheorem{theorem}{Theorem}[section]

\usepackage{amsthm}
\usepackage{booktabs}
\usepackage{threeparttable}

\newtheorem{proposition}[theorem]{Proposition}
\theoremstyle{remark}

\usetikzlibrary{bayesnet}

\tikzset{
  treenode/.style = {align=center, inner sep=0pt, text centered,
    font=\sffamily},
  arn_n/.style = {treenode, circle, black, font=\sffamily\bfseries, draw=black,
    fill=white, text width=1.5em}
}

\title{Structural Concentration in Weighted Networks: A Class of Topology-Aware Indices.} 

\author[$\star$]{L. Riso}
\author[$\star$]{M.G. Zoia\thanks{Corresponding author. Email: maria.zoia@unicatt.it } }

\affil[$\star$]{Department of Economic Policy, Università Cattolica del Sacro Cuore, Milan, Italy}

\begin{document}

\maketitle

\begin{abstract}
This paper develops a unified framework for measuring concentration in weighted systems embedded in networks of interactions. While traditional indices such as the Herfindahl–Hirschman Index capture dispersion in weights, they neglect the topology of relationships among the elements receiving those weights. To address this limitation, we introduce a family of topology-aware concentration indices that jointly account for weight distributions and network structure.

At the core of the framework lies a baseline Network Concentration Index (NCI), defined as a normalized quadratic form that measures the fraction of potential weighted interconnection realized along observed network links. Building on this foundation, we construct a flexible class of extensions that modify either the interaction structure or the normalization benchmark, including weighted, density-adjusted, null-model, degree-constrained, transformed-data, and multi-layer variants.

This family of indices preserves key properties such as normalization, invariance, and interpretability, while allowing concentration to be evaluated across different dimensions of dependence, including intensity, higher-order interactions, and extreme events. Theoretical results characterize the indices and establish their relationship with classical concentration and network measures.

Empirical and simulation evidence demonstrate that systems with identical weight distributions may exhibit markedly different levels of structural concentration depending on network topology, highlighting the additional information captured by the proposed framework. The approach is broadly applicable to economic, financial, and complex systems in which weighted elements interact through networks.

\end{abstract}
\textbf{Keywords} Weighted Networks; Network Concentration; Network Topology; Structural Dependence; Systemic Risk. \\
\textbf{JEL codes} C02,C65, D85, C63.
\section{Introduction}

Concentration is a fundamental characteristic of weighted systems and
plays an important role in the analysis of economic, financial, social,
biological, and technological structures \citep{hirschman1958strategy,wagstaff1991measurement,gini1912variability}. Whenever a set of elements is
assigned heterogeneous weights, the degree to which those weights are
concentrated or dispersed affects the resilience, exposure, and
functional organization of the system. Highly concentrated weight
distributions tend to increase vulnerability to shocks affecting a small
subset of important elements, whereas more dispersed distributions may
provide greater balance and robustness. For this reason, concentration
measures play a central role in many fields, including industrial
organization, network science, risk analysis, and resource allocation.

The most widely used measures of concentration are based solely on the
distribution of weights. Classical examples include the
Herfindahl--Hirschman Index (HHI) \citep{hirschman1958strategy}, originally introduced in the study of
market concentration in industrial organization ,
, as well as other inequality and dispersion measures
such as the \cite{gini1912variability} coefficient  and entropy-based indices
proposed by \cite{fisher1970economics}. These measures summarize the dispersion of weights
across elements and provide simple and interpretable indicators of
concentration. Because of their analytical simplicity and clear
interpretation, they have been widely applied in economics, finance, and
regulatory analysis \citep[among many others]{rhoades1993herfindahl,cowell2011measuring,atkinson1970measurement}.

Despite their usefulness, traditional concentration measures evaluate
concentration purely in weight space and therefore ignore an essential
dimension of many real-world systems: the structure of relationships
among the elements receiving those weights. Two systems with identical
weight distributions may exhibit substantially different levels of
effective concentration if their underlying interdependence structures
differ. A system whose largest weights are assigned to strongly
interconnected elements may be much more structurally concentrated than
another system with the same weight dispersion but weaker or more
fragmented connectivity.

This limitation becomes increasingly important in settings where the
behaviour of the system depends not only on the distribution of weights
but also on the topology of interactions among its components. In
economic and financial applications, for example, exposures and
allocations may be distributed across firms, sectors, institutions, or
assets whose relationships are shaped by dependence, correlation, or
transmission mechanisms. More generally, in any weighted system embedded
in a network of interactions, the effective degree of concentration
depends not only on how weight is distributed across nodes but also on
how those nodes are interconnected.

Network science provides powerful tools for
characterizing interdependence structures using graph-theoretic
representations \citep{brede2012networks, jackson2008social, posfai2016network}. In such 
representations, nodes denote the elements of the system and edges
represent interactions such as dependence, similarity, or transmission
channels \citep[among many others]{riso2022concept,jordan2004graphical,hojsgaard2012graphical}. Network methods have been widely applied in economics and
finance to analyse systems of interconnected agents, including
production networks \citep{acemoglu2012network, carvalho2014micro}, financial
contagion and systemic risk \citep{allen2000financial, elliott2014financial}, and
dependence structures in financial markets \citep{mantegna1999hierarchical}. These
approaches provide powerful tools for identifying clusters, central
nodes, and patterns of connectivity within complex systems \citep{pozzi2008centrality}.

However, existing network measures typically characterize the topology
of the interaction network independently of the distribution of weights
across nodes. As a result, they do not provide a direct measure of the
structural concentration associated with a particular allocation of
weights within the network. Consequently, traditional concentration
measures ignore network structure, while network statistics ignore the
distribution of weights.

This paper proposes a unified framework for measuring concentration in weighted networks, in which a family of topology-aware concentration indices is constructed by combining weight distributions with interaction structures. Rather than introducing a single measure, the paper develops a class of indices that differ in their specification of the interaction functional and the normalization benchmark, thereby allowing structural concentration to be evaluated along multiple dimensions of dependence.

At the core of this framework lies a baseline specification, the Network Concentration Index (NCI), which measures the proportion of potential weighted interconnection that is realized along direct network links. The index is defined as a normalized quadratic form involving the weight vector and the adjacency matrix of a filtered network, and can be interpreted as a structural extension of classical concentration measures.

Building on this baseline formulation, the framework admits several extensions. Alternative specifications incorporate link intensity through weighted interaction matrices, isolate specific dependence features through transformed-data networks, or aggregate multiple interaction channels in a multi-layer setting. In addition, different normalization schemes allow concentration to be evaluated relative to density-adjusted, stochastic, or degree-constrained benchmarks. Taken together, these constructions define a flexible and internally consistent family of concentration indices that generalizes traditional weight-based measures while preserving normalization and invariance properties.

The proposed framework is intentionally general and can be applied in a
wide range of contexts whenever weighted elements are embedded in a
network of interactions. These contexts include industrial and market
structures, production and supply networks, financial and interbank
systems, asset allocation problems, and social or technological
networks. In all these cases, the interaction between weight
heterogeneity and network topology plays a central role in determining
the effective structure of the system.
The NCI also connects naturally to 
several strands of the Operations Research literature \citep{inomata2024measuring,gomez2013modeling,ouraga2021principal,parsa2018analysis}. 
In portfolio optimisation, concentration measures play a 
central role as constraints or objective components in 
mean-variance and risk-parity models \citep{scherer2007portfolio,maillard2010properties,roncalli2016risk}: the NCI extends this 
toolbox by providing a topology-aware diversification 
diagnostic that penalises not only weight concentration 
but also the structural clustering of large positions 
within correlated subgraphs. In supply chain and network 
design, concentration of flows or capacities along 
critical links is a primary driver of systemic 
vulnerability \citep{snyder2016or,hendricks2005empirical}: the NCI provides a 
scalar summary of this vulnerability that is comparable 
across network topologies and weight distributions. 
More broadly, the index can be embedded as an objective 
or constraint in network optimisation problems, 
for instance to find weight allocations that minimise 
structural concentration subject to performance 
constraints, or to identify the network topology that 
maximises resilience for a given weight distribution. 
These connections motivate the development of a 
theoretically grounded, computationally tractable 
concentration measure that integrates weight 
heterogeneity with network structure.
The main contributions of the paper are threefold. First, we introduce a unified framework for measuring concentration that integrates weight distributions with network structure. Second, we show that the baseline specification — the Network Concentration Index — extends classical quadratic concentration measures in a natural and coherent way. Third, we develop a family of extensions that allow structural concentration to be measured across different dependence representations, including baseline dependence, higher-order dependence, and extreme-event dependence.


The remainder of the paper is organized as follows. 
Section~\ref{sec:classical_concentration} reviews the main approaches to concentration measurement 
and discusses their limitations in the presence of network 
interdependence. Section~\ref{sec:NCI_definition} introduces the Network Concentration 
Index and its extensions, and establishes their main analytical 
properties. Section~\ref{sec:simulation} investigates the behaviour of the proposed 
indices through simulation and Monte Carlo experiments, illustrating 
how they respond to different network topologies and weight 
distributions. Section~\ref{sec:empirical} presents empirical applications to 
production and trade networks derived from the World Input–Output 
Database, while Section~\ref{sec:empirical_finance} illustrates the approach using a financial 
dependence network constructed from equity market data. 
Section~\ref{sec:conclusions} concludes.

\section{Concentration Measures and Network Structure}
\label{sec:classical_concentration}

The measurement of concentration plays a central role in economics,
statistics, and network science \citep[among many others]{kvaalseth2018relationship,lorenz1905methods,atkinson1970measurement,jackson2008social}. Concentration indices are widely used
to quantify how unevenly a quantity such as market share, activity,
exposure, or resources is distributed across a set of elements.
Traditional approaches focus on the dispersion of weights, whereas more
recent work studies systems in which weighted elements interact through
 network structures.
 This section reviews the main strands of the literature and highlights the gap addressed by the proposed framework of topology-aware concentration indices.

\subsection{Weight-Based Concentration Measures}

Traditional measures of concentration evaluate the dispersion of a
weight distribution. Let $w=(w_1,\dots,w_N)'$ denote a vector of
nonnegative weights satisfying $\sum_{i=1}^N w_i = 1$. A widely used
indicator is the Herfindahl--Hirschman Index (HHI)
\citep{hirschman1958strategy,herfindahl1997concentration},
\begin{gather}
HHI = \sum_{i=1}^{N} w_i^2 
\end{gather}
which takes values in the interval $1/N \le HHI \le 1$, with the lower
bound corresponding to equal weights and the upper bound to complete
concentration.

Other measures of dispersion include entropy-based indices such as the
Theil index \citep{theil1973new} and inequality measures such as
the Gini coefficient \citep{gini1912variability} \citep[see also][]{sen1997economic}.

While widely used, these measures evaluate concentration purely in
weight space and therefore ignore the relationships among the elements
receiving those weights. Consequently, systems with identical weight
distributions may exhibit different structural properties when the
elements are interconnected through dependence or interaction networks.
\subsection{Contribution-Based Measures}

A second strand of the literature evaluates concentration through the
contribution of individual elements to an aggregate quantity. Let
$F(w)$ denote a homogeneous aggregate function of the weights, such as
output, activity, or risk. The marginal contribution of element $i$ is
$\partial F/\partial w_i$, and the corresponding total contribution is
$C_i = w_i \, \partial F/\partial w_i$. Under standard regularity
conditions, Euler's theorem implies the decomposition
\begin{gather}
F(w) = \sum_{i=1}^N C_i 
\end{gather}

Contribution-based decompositions are widely used in applications such
as portfolio risk allocation and capital attribution
\citep{litterman1996hot,tasche2007capital}. However, these measures
depend on the specific functional form of the aggregate function $F$
and therefore do not provide a model-independent measure of structural
relationships among elements.
\subsection{Network-Based Measures of Structure}

When the elements of a system interact with one another, their
relationships can naturally be represented using networks \citep{jordan2004graphical}. Let

\begin{equation*}
G=(V,E)    
\end{equation*}
denote a network (or graph $G$) in which nodes (vertices $V$) represent elements of the system and
edges (or links $E$) represent interactions among them \citep{jordan2004graphical}. The network can be represented
by an adjacency matrix:
\begin{equation*}
A \in \{0,1\}^{N\times N}
\end{equation*}
which define the relationships between the nodes through  the corresponding links \citep{riso2022concept}.
Network science provides a wide range of tools for characterizing the
topology of interaction structures. Examples include node degree,
centrality measures, clustering coefficients, community detection, and
network density \citep{newman2010networks, jackson2008social, posfai2016network}.

These methods have been widely applied in economics and finance to
analyze systems of interacting agents. Important examples include
production networks and the propagation of sectoral shocks
\citep{acemoglu2012network, carvalho2014micro}, financial contagion in interbank
networks \citep{allen2000financial, elliott2014financial}, and dependence networks in
financial markets constructed from filtered correlation matrices
\citep{mantegna1999hierarchical}.

While these approaches provide powerful tools for describing the
topology of interaction structures, standard network statistics
typically characterize network structure independently of the
distribution of weights across nodes. As a result, they do not provide
a direct measure of concentration associated with a particular
allocation of weights within a network.

The literature reviewed above highlights three complementary
perspectives on concentration.
None of these approaches provides a measure that simultaneously
accounts for both the distribution of weights and the topology of
interactions among the elements of a system.
%
For instance, centrality-based measures (e.g., eigenvector or Katz centrality \cite{katz1953status}) assign importance scores to individual nodes based on their position in the network, but do not provide a scalar measure of concentration for a given weight distribution. Systemic risk measures and contagion-based metrics depend on specific propagation mechanisms and therefore require additional modelling assumptions. More generally, quadratic interaction and exposure measures summarize dependence across node pairs but typically do not isolate the role of network topology relative to weight dispersion. In contrast, the family of the Network Concentration Index evaluates the alignment between economically relevant weights and observed network links, providing a distribution-level measure of structural concentration that is directly comparable across different network configurations.

This family is based on quadratic interaction measures that appear in several areas
of economics, finance, and spatial statistics. For example, quadratic interaction terms arise in models of production
networks \citep{acemoglu2012network} and in models of financial contagion and
interconnected balance sheets \citep{allen2000financial, elliott2014financial}.
They
also appear in spatial statistics
through spatial autocorrelation measures such as \cite{moran1950notes}'s $I$. Moran's statistic evaluates whether similar values
of a variable tend to occur in neighbouring locations within a spatial
network.

More generally, quadratic exposure measures summarize the level of
interaction induced by a given network structure but typically do not
provide a normalization that separates the structural component of
interaction from the dispersion of the underlying weight vector.
By contrast, the family of indices proposed in this paper evaluates
the extent to which economically important nodes (as measured by their
weights) are interconnected within a network, under alternative
specifications of both the interaction structure and the normalization
benchmark.

Accordingly, these indices measure realized weighted interaction
relative to a benchmark level of potential interaction determined by
the chosen normalization. In this sense, the proposed framework extends
classical quadratic concentration measures to settings in which
weighted elements are embedded in interaction networks, thereby
capturing a structural dimension of concentration that cannot be
detected using traditional weight-based metrics.

The following section introduces the proposed framework — and its baseline specification, the Network Concentration Index (NCI) — together with its main analytical properties.
\section{A Unified Framework of Network Concentration Indices}
\label{sec:NCI_definition}

This section introduces a unified framework for topology-aware concentration measures. The proposed indices can be written as ratios of quadratic forms, where the numerator captures realized weighted interaction under a given network structure, and the denominator provides a benchmark level of potential interaction. Within this general class, the baseline Network Concentration Index (NCI) arises as a canonical specification. Alternative indices are then obtained as systematic extensions through modifications of either the interaction matrix or the benchmark normalization.

\subsection{Unified formulation}

Let $w \in \mathbb{R}^n$ be a vector of non-negative weights such that $\sum_i w_i = 1$. 
Depending on the application, the weights may be indexed by time or treated as static.

We define the general class of network concentration indices specified as follows
\begin{equation}
\Psi(w;M,B) = \frac{w^\top M w}{w^\top B w}
\label{eq:master_quadratic_nci}
\end{equation}

The matrix $M$ encodes the structure along which weighted interaction is realized, while $B$ determines the normalization relative to a benchmark configuration. Different choices of $(M,B)$ generate different indices within a unified representation.

To ensure that $\Psi(w;M,B)$ is well-defined and interpretable, we impose the following conditions:
\begin{itemize}
    \item $M$ and $B$ are symmetric;
    \item $w^\top B w > 0$ for all admissible $w$;
    \item $B$ represents a meaningful benchmark of potential interactions.
\end{itemize}

\subsection{The baseline index as a canonical specification}

The baseline Network Concentration Index is obtained by choosing:
\begin{equation}
M = A, \qquad B = \mathbf{1}\mathbf{1}^\top - I,
\end{equation}
where $A$ is the (binary) adjacency matrix of the observed network, $\mathbf{1}$ is the vector of ones, and $I$ is the identity matrix.

Substituting into the unified formulation yields
\begin{equation}
\psi(w,A) = \frac{w^\top A w}{w^\top (\mathbf{1}\mathbf{1}^\top - I) w}.
\end{equation}

Using $\sum_i w_i = 1$, the denominator simplifies to
\begin{equation}
w^\top (\mathbf{1}\mathbf{1}^\top - I) w = 1 - \sum_i w_i^2,
\end{equation}
so that
\begin{equation}\label{eq:nci_baseline}
\psi(w,A) = \frac{w^\top A w}{1 - \sum_i w_i^2}.
\end{equation}

The index numerator, say $\Theta=w^{\top} A w$, measures the total weighted interaction
allocated along the edges of the network: it increases when large
weights are assigned to nodes that are directly connected in the
interaction structure. Quadratic forms of this type are common in the
analysis of networked systems and dependence structures, where they
summarize the second-order interaction effects implied by a weighted
configuration \citep{acemoglu2012network,elliott2014financial}.

Regarding the denominator, since $\sum_i w_{i}=1$, it follows that:
\begin{equation}
\label{eq:Lambda_simplify}
\Lambda
=
\left(\sum_{i=1}^{N} w_{i}\right)^2 - \sum_{i=1}^{N} w_{i}^2
=
1-\sum_{i=1}^{N} w_{i}^2=1 - HHI
\end{equation}
where $HHI=\sum_i w_{i}^2$ corresponds to the 
Hirschman index
\citep{hirschman1958strategy,herfindahl1997concentration}.

The normalization term $\Lambda = 1-HHI$ represents the total
pairwise interaction implied by the weight distribution alone. It
corresponds to the maximum weighted interconnection that could occur
if all pairs of nodes were linked, and therefore provides a benchmark
that depends only on weight dispersion and not on network topology.

In light of the above developments, the Network Concentration Index can be represented as
\begin{equation}
\label{eq:NCI_def}
\psi
=
\frac{\Theta}{\Lambda}
=
\frac{w^{\top} Aw}{1-HHI}
\qquad
\text{for } \Lambda>0
\end{equation}
When $\Lambda=0$---which occurs only when the entire weight is assigned to a single node---the index $\psi$ is not informative, because no pairwise interaction can be formed.
in empirical applications, this degenerate
case can be excluded by construction or handled by setting
$\psi=0$ by convention.

The NCI is a topology-aware concentration measure. The numerator
$\Theta$ captures the structural interaction realized by measuring
how much total weight is placed on pairs of nodes that are directly
linked in the observed network. The denominator $\Lambda$ captures the
potential interconnection induced by weights alone, that is, the
maximum pairwise interaction compatible with the weight distribution
under a complete network. Thus $\Lambda$ is increasing in weight
dispersion and decreasing in weight concentration.

Accordingly, $\psi$ measures the fraction of potential weighted
interconnection that is effectively realized along direct network links:
\begin{equation}
    \psi
    =
    \frac{\text{realized interaction (along observed links)}}
         {\text{maximum interaction implied by weight dispersion}}
    \label{eq:Psi}
\end{equation}
For a symmetric binary adjacency matrix with zero diagonal, $\psi\in [0,1]$ by construction.
Values approaching zero arise
when large weights are assigned to nodes sharing weak or absent
connections, so that the realized quadratic form $w^{\top} A w$
remains small relative to its upper bound. Conversely, values approaching
unity indicate that heavy weights are concentrated on strongly connected
node pairs, so that the network structure amplifies rather than
attenuates weight concentration. $\psi$ can therefore be interpreted
as a network-adjusted concentration measure: by normalizing
$w^{\top} A w$ by the maximum interaction consistent with the
observed weight distribution, the index removes the mechanical
contribution of weight dispersion and isolates the incremental role of
network topology.

The NCI enjoys several properties that make it suitable for
applied analysis. It is bounded in $[0,1]$ under binary adjacency,
invariant to node relabeling, and admits transparent limiting cases.
By normalizing with respect to $1 - \mathrm{HHI}$, it disentangles
the purely mechanical effect of weight concentration from the structural
effect of allocating weight to interconnected nodes. This decomposition
makes the NCI a diagnostic tool for comparing the interaction structures
activated by different weighted configurations over the same network, or
by the same weight distribution over different network topologies.


It is also instructive to contrast the NCI with covariance-based
interaction measures. Quadratic forms defined on covariance or
dependence matrices summarize aggregate interaction in moment space,
capturing the full pairwise dependence structure. The NCI, by contrast,
evaluates concentration in network space, restricting attention to
direct links in the interaction graph. The two approaches are therefore
complementary: covariance-based measures characterize interaction
through the complete dependence matrix, while the NCI focuses on the
topology of observed connections. In settings where both the weight
distribution and the network structure are relevant---such as financial
contagion, supply-chain risk, or ecological flow networks---using the
two measures in conjunction provides a richer characterization of
systemic concentration than either alone.


The Network Concentration Index possesses a set of analytical 
properties that clarify its interpretation and justify its 
construction. We present these results through four propositions. 
Proposition~\ref{prop:wavg} establishes that the index admits a 
weighted-average representation over pairwise interaction 
intensities. Proposition~\ref{prop:norm} shows that the 
normalization adopted is the unique denominator consistent with 
a complete-network benchmark.  
These first two propositions concern the subclass of indices that retain the complete-network normalization, i.e. $B=\mathbf{1}\mathbf{1}^\top-I$.
Proposition~\ref{prop:assort} 
connects the index to the concept of assortative connectivity. 
Finally, Proposition~\ref{prop:random} provides a random-network 
benchmark that links the expected value of the index to network 
density. 

In the following, $w = (w_1, \dots, w_N)^{\top}$ denotes 
a non-negative weight vector satisfying $\sum_{i=1}^{N} w_i = 1$, 
and $M = [M_{ij}]$ denotes a symmetric matrix with zero diagonal 
and non-negative off-diagonal entries.

\begin{proposition}[\textbf{Weighted-average representation}]
\label{prop:wavg}

Let $M$ and $w$ satisfy the conditions above, and suppose 
$1 - \sum_{i=1}^{N} w_i^2 > 0$. Then
\begin{gather}
    \psi^{(M)}(w)
    =
    \frac{w^{\top} M w}{1 - \sum_{i=1}^{N} w_i^2}
    =
    \frac{\displaystyle\sum_{i \neq j} w_i w_j M_{ij}}
         {\displaystyle\sum_{i \neq j} w_i w_j}
\end{gather}
that is, $\psi^{(M)}(w)$ equals the weighted average of 
$M_{ij}$ across all ordered node pairs $(i,j)$,
with each 
pair weighted proportionally to $w_i w_j$.
\end{proposition}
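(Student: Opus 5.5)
The plan is to rewrite numerator and denominator separately as sums over ordered pairs of distinct nodes. Then the weighted-average reading follows directly from nonnegativity of the weights.

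First, the numerator. Expanding the quadratic form gives $w^{\top} M w = \sum_{i,j} w_i w_j M_{ij}$. Since $M$ has zero diagonal, the terms with $i=j$ vanish, leaving $\sum_{i\neq j} w_i w_j M_{ij}$.

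Second, the denominator. We reuse the computation in \eqref{eq:Lambda_simplify}. Because $\sum_i w_i = 1$,
\begin{equation*}
1-\sum_i w_i^2 = \Big(\sum_i w_i\Big)^2 - \sum_i w_i^2 = \sum_{i,j} w_i w_j - \sum_i w_i^2 = \sum_{i\neq j} w_i w_j .
\end{equation*}
Equivalently, this is the quadratic form $w^{\top}(\mathbf{1}\mathbf{1}^\top - I)w$, whose matrix has ones off the diagonal and zeros on it. By hypothesis this quantity is strictly positive, so the ratio is well defined. Dividing the numerator by the denominator yields the displayed identity.

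Third, the interpretation. Define $\pi_{ij} = w_i w_j / \sum_{k\neq l} w_k w_l$ for $i\neq j$. Each $\pi_{ij}$ is nonnegative because the weights are nonnegative, and the $\pi_{ij}$ sum to one over ordered pairs. Hence $\psi^{(M)}(w) = \sum_{i\neq j}\pi_{ij} M_{ij}$ is a convex combination of the off-diagonal entries of $M$, with each pair weighted proportionally to $w_i w_j$.

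It is worth noting a consequence, though the statement does not require it. Since a convex combination lies between the smallest and largest of its terms, $\min_{i\neq j} M_{ij} \le \psi^{(M)} \le \max_{i\neq j} M_{ij}$. For binary $A$ this recovers $\psi\in[0,1]$.

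None of the steps is a real obstacle; the argument is pure bookkeeping. The only points that need care are the following:
\begin{itemize}
\item the zero-diagonal assumption on $M$ is what removes the $i=j$ terms from the numerator;
\item the normalization $\sum_i w_i = 1$ is used in the denominator, where it turns $1-\sum_i w_i^2$ into the sum over distinct pairs;
\item the sums are taken over ordered pairs, so each unordered pair appears twice in both numerator and denominator, and the factor of two cancels consistently. This matches the phrase ``ordered node pairs'' in the statement.
\end{itemize}
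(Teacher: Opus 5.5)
Your proposal is correct and follows the paper's proof essentially step for step: the zero diagonal reduces $w^{\top} M w$ to $\sum_{i\neq j} w_i w_j M_{ij}$, and $\sum_i w_i = 1$ turns $1-\sum_i w_i^2$ into $\sum_{i\neq j} w_i w_j$. Your explicit convex weights $\pi_{ij}$ and the bound $\min_{i\neq j} M_{ij} \le \psi^{(M)}(w) \le \max_{i\neq j} M_{ij}$ are a correct extra; the paper proves the bound only for binary $A$, separately, in Proposition~\ref{app:prop:bounds}.
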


\begin{proof}
Since $M_{ii} = 0$ for all $i$, we have
\begin{gather}
    w^{\top} M w = \sum_{i \neq j} w_i w_j M_{ij}
\end{gather}
Moreover, using $\sum_i w_i = 1$,
\begin{gather}
    1 - \sum_{i=1}^{N} w_i^2
    =
    \Bigl(\sum_{i=1}^{N} w_i\Bigr)^2 - \sum_{i=1}^{N} w_i^2
    =
    \sum_{i \neq j} w_i w_j
\end{gather}
Substituting both expressions into the definition of 
$\psi^{(M)}(w)$ yields the result.
\end{proof}

\noindent
Proposition~\ref{prop:wavg} provides the key interpretive 
result: the NCI aggregates pairwise interaction intensities 
through a weighting scheme that reflects the joint importance 
of each node pair in the weight distribution. The next 
proposition shows that the denominator $1 - \sum_i w_i^2$ 
is not merely convenient, but is in fact the unique 
normalization consistent with a natural benchmark condition.
A generalization of this result to the full unified family 
of indices is provided in 
Appendix~\ref{sec:appendix} 
(Proposition~\ref{app:prop:wa}).

\begin{proposition}[\textbf{Normalization characterization}]
\label{prop:norm}

Consider indices of the form
\begin{gather}
    \Psi(w, M) = \frac{w^{\top} M w}{D(w)}
\end{gather}
where $D(w) > 0$ whenever at least two components of $w$ are 
positive. If
\begin{gather}
    \Psi\bigl(w,\, \mathbf{1}\mathbf{1}^{\top} - I\bigr) = 1
    \quad \text{for every admissible } w,
\end{gather}
then necessarily
\begin{gather}
    D(w)
    =
    w^{\top}(\mathbf{1}\mathbf{1}^{\top} - I)\,w
    =
    1 - \sum_{i=1}^{N} w_i^2
\end{gather}
\end{proposition}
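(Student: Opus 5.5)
The argument is a direct substitution followed by the algebraic identity already used in Proposition~\ref{prop:wavg}. I would fix an arbitrary admissible $w$, meaning non-negative with $\sum_i w_i = 1$ and at least two positive components, so that $D(w)>0$ by hypothesis. Setting $M = \mathbf{1}\mathbf{1}^{\top} - I$ in the defining form of $\Psi$, the benchmark condition $\Psi(w,\mathbf{1}\mathbf{1}^{\top}-I)=1$ reads
\begin{gather*}
\frac{w^{\top}(\mathbf{1}\mathbf{1}^{\top}-I)\,w}{D(w)} = 1 .
\end{gather*}
Since $D(w)$ is strictly positive, I can multiply through to obtain $D(w) = w^{\top}(\mathbf{1}\mathbf{1}^{\top}-I)\,w$ pointwise on the admissible set. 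This is the first claimed equality.

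For the second equality I would expand the quadratic form:
\begin{gather*}
w^{\top}\mathbf{1}\mathbf{1}^{\top}w - w^{\top}w = \Bigl(\sum_i w_i\Bigr)^2 - \sum_i w_i^2 = 1 - \sum_i w_i^2 ,
\end{gather*}
which is exactly the computation in \eqref{eq:Lambda_simplify} and in the proof of Proposition~\ref{prop:wavg}. Equivalently, this equals $\sum_{i\neq j} w_i w_j$, which confirms positivity whenever two components are positive and so shows consistency with the hypothesis on $D$.

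The only point needing care is the domain on which uniqueness holds. If $w$ is a vertex of the simplex, the numerator vanishes, and the normalization condition gives the form $0/D(w)$, which cannot equal $1$ for any finite positive $D$. I would therefore state explicitly that \emph{admissible} excludes these degenerate weight vectors, consistent with the convention adopted after \eqref{eq:NCI_def}. Uniqueness then holds on the set of weight vectors with at least two positive entries, while $D$ at the vertices is unconstrained and irrelevant, since $\Psi$ is not informative there. Beyond clarifying this domain issue, I expect no real obstacle: the result is essentially a one-line inversion of the normalization condition.
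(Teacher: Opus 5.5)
Your proposal is correct and matches the paper's proof: substitute $M=\mathbf{1}\mathbf{1}^{\top}-I$, invert the normalization condition using $D(w)>0$, and expand the quadratic form as $\bigl(\sum_i w_i\bigr)^2-\sum_i w_i^2 = 1-\sum_i w_i^2$. Your remark that \emph{admissible} must exclude the simplex vertices (where the condition would read $0/D(w)=1$) is a valid refinement that the paper leaves implicit.
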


\begin{proof}
Evaluating the complete-network normalization condition directly,
\begin{gather}
    1
    =
    \Psi\bigl(w,\, \mathbf{1}\mathbf{1}^{\top} - I\bigr)
    =
    \frac{w^{\top}(\mathbf{1}\mathbf{1}^{\top} - I)\,w}{D(w)}
\end{gather}
so $D(w) = w^{\top}(\mathbf{1}\mathbf{1}^{\top} - I)\,w$. Since 
$\sum_i w_i = 1$,
\begin{gather}
    w^{\top}(\mathbf{1}\mathbf{1}^{\top} - I)\,w
    =
    \Bigl(\sum_i w_i\Bigr)^2 - \sum_i w_i^2
    =
    1 - \sum_{i=1}^{N} w_i^2 
\end{gather}
\end{proof}

\noindent
The complete-network condition is natural: if all nodes are 
mutually connected, the fraction of realized interaction equals 
one regardless of the weight distribution. 
Proposition~\ref{prop:norm} shows that this single requirement 
uniquely pins down the denominator to $1 - HHI$, 
thereby giving the normalization an axiomatic foundation.
The explicit relationship between the normalization 
denominator and the Herfindahl--Hirschman Index is 
formalized in Appendix~\ref{sec:appendix} 
(Proposition~\ref{app:prop:hhi}).

\begin{proposition}[\textbf{Assortative connectivity}]
\label{prop:assort}

Let $A = [A_{ij}]$ be a symmetric binary adjacency matrix with 
zero diagonal, with $\sum_{i \neq j} A_{ij} > 0$. Then
\begin{gather}
    \psi(w, A)
    =
    \delta(A)\,
    \frac{\mathbb{E}[\,w_i w_j \mid A_{ij} = 1\,]}
         {\mathbb{E}[\,w_i w_j\,]},
\end{gather}
where
\begin{equation}
    \delta(A)
    =
    \frac{\sum_{i \neq j} A_{ij}}{N(N-1)}
\label{densit}
\end{equation}
    
is network density, $\mathbb{E}[\,w_i w_j \mid A_{ij} = 1\,]$ 
is the average pairwise weight product over linked pairs, and 
$\mathbb{E}[\,w_i w_j\,]$ is the average over all ordered 
pairs.
\end{proposition}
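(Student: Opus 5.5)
The plan is to start from the weighted-average representation of Proposition~\ref{prop:wavg} with $M=A$. Since $A$ is symmetric, binary and has zero diagonal, it satisfies the standing assumptions, so
\begin{gather*}
\psi(w,A)=\frac{\sum_{i\neq j} w_i w_j A_{ij}}{\sum_{i\neq j} w_i w_j}.
\end{gather*}
The proof then amounts to rewriting numerator and denominator as averages over ordered pairs. Throughout, the expectations are read as uniform averages over the $N(N-1)$ ordered pairs $(i,j)$ with $i\neq j$.

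\textbf{Denominator.} Under this convention,
\begin{gather*}
\mathbb{E}[w_iw_j]=\frac{1}{N(N-1)}\sum_{i\neq j}w_iw_j,
\qquad\text{so}\qquad
\sum_{i\neq j}w_iw_j=N(N-1)\,\mathbb{E}[w_iw_j].
\end{gather*}

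\textbf{Numerator.} Let $|E|:=\sum_{i\neq j}A_{ij}$ be the number of ordered linked pairs; it is positive by hypothesis, so the conditional average below is well defined. Because $A_{ij}\in\{0,1\}$, the numerator is the sum of $w_iw_j$ over exactly those ordered pairs with $A_{ij}=1$. Hence
\begin{gather*}
\sum_{i\neq j}w_iw_jA_{ij}=|E|\,\mathbb{E}[w_iw_j\mid A_{ij}=1].
\end{gather*}

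\textbf{Combining.} Taking the ratio gives
\begin{gather*}
\psi(w,A)=\frac{|E|}{N(N-1)}\cdot\frac{\mathbb{E}[w_iw_j\mid A_{ij}=1]}{\mathbb{E}[w_iw_j]},
\end{gather*}
and the first factor is $\delta(A)$ as defined in \eqref{densit}. We also need $\mathbb{E}[w_iw_j]>0$, which is equivalent to $1-\sum_i w_i^2>0$. This is the standing nondegeneracy condition of Proposition~\ref{prop:wavg} and should be stated explicitly as an assumption.

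The only delicate step is consistency of conventions. The density in \eqref{densit} counts ordered pairs, so both expectations must also be taken over ordered pairs with the same diagonal exclusion; otherwise spurious factors of $2$ or diagonal terms $w_i^2$ would appear. Under a symmetric $A$ the unordered and ordered versions coincide, and I would note this in a remark.
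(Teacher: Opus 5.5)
Your proof is correct and follows the paper's route: both start from the weighted-average representation of Proposition~\ref{prop:wavg} and multiply and divide by $\sum_{i\neq j}A_{ij}$ and $N(N-1)$ to expose $\delta(A)$ and the two averages. Your explicit note that $1-\sum_i w_i^2>0$ must be assumed, which the statement leaves implicit, is a worthwhile addition. One caution on notation: the paper's appendix uses $|E|$ for the number of unordered edges, so there $\sum_{i\neq j}A_{ij}=2|E|$.
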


\begin{proof}
Starting from the weighted-average representation established 
in Proposition~\ref{prop:wavg}, multiply and divide by 
$\sum_{i \neq j} A_{ij}$ and by $N(N-1)$:
\begin{gather}
    \psi(w, A)
    =
    \frac{\sum_{i \neq j} w_i w_j A_{ij}}{\sum_{i \neq j} A_{ij}}
    \cdot
    \frac{\sum_{i \neq j} A_{ij}}{N(N-1)}
    \cdot
    \frac{N(N-1)}{\sum_{i \neq j} w_i w_j}
\end{gather}
Rearranging gives
\begin{gather}
    \psi(w, A)
    =
    \underbrace{
        \frac{\sum_{i \neq j} A_{ij}}{N(N-1)}
    }_{\delta(A)}
    \cdot
    \frac{
        \dfrac{\sum_{i \neq j} w_i w_j A_{ij}}
              {\sum_{i \neq j} A_{ij}}
    }{
        \dfrac{\sum_{i \neq j} w_i w_j}
              {N(N-1)}
    }
\end{gather}
which is the stated representation.
\end{proof}

\noindent
Proposition~\ref{prop:assort} provides an intuitive decomposition of the
Network Concentration Index (NCI). Specifically, the index can be written
as the product of two components. The first component is the network
density $\delta(A)$, which measures the overall connectivity of the graph
by capturing the fraction of possible links that are actually present.
The second component is the ratio between the average pairwise product of
node weights among connected pairs and the corresponding average across
all node pairs. This ratio measures the extent to which weights are aligned with that connectivity.
 When this ratio
exceeds one, the network displays assortative connectivity with respect
 to node weights, meaning that nodes associated with high weights tend to be connected with each other.
Consequently, the NCI is high when the network
is both densely connected and structurally assortative with respect to
node weights. 

The following proposition provides a tractable benchmark by
evaluating the expected value of the index under a canonical
random-network model. The canonical random-network model refers to the Erdős–Rényi
random graph, in which each pair of nodes is connected
independently with probability $p$. 
A formal derivation of the density-adjusted counterpart 
of this result, which expresses the assortative-connectivity 
ratio as a stand-alone index, is given in 
Appendix~\ref{sec:appendix} 
(Proposition~\ref{app:prop:dens}).

\begin{proposition}[\textbf{Random-network benchmark}]
\label{prop:random}
Let $A$ be the adjacency matrix of an Erd\H{o}s--R\'enyi random graph \citep{erdHos1960evolution}
in which each pair $(i,j)$ with $i \neq j$ is 
connected independently with probability $p \in (0,1)$. Then
\begin{gather}
    \mathbb{E}[\,\psi(w, A)\,] = p
\end{gather}
\end{proposition}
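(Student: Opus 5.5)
The plan is to use the weighted-average representation of Proposition~\ref{prop:wavg}. The key point is that the randomness sits only in the numerator, while the denominator depends on $w$ alone. Since $w$ is a fixed, non-random weight vector with $1-\sum_i w_i^2>0$ (the non-degenerate case, which is what we assume), we can write
\begin{gather}
\psi(w,A) = \frac{\sum_{i\neq j} w_i w_j A_{ij}}{\sum_{i\neq j} w_i w_j}.
\end{gather}
The denominator is a deterministic positive constant. The numerator is a finite linear combination of the Bernoulli entries $A_{ij}$.

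The steps, in order, are as follows.

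First, note that under the Erd\H{o}s--R\'enyi model each unordered pair $\{i,j\}$ is linked with probability $p$. Because $A$ is symmetric, this gives $A_{ij}=A_{ji}$ and $\mathbb{E}[A_{ij}]=p$ for every ordered pair with $i\neq j$. The diagonal entries are zero and never enter the sum.

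Second, apply linearity of expectation, pulling the constant denominator outside:
\begin{gather}
\mathbb{E}[\psi(w,A)] = \frac{\sum_{i\neq j} w_i w_j\,\mathbb{E}[A_{ij}]}{\sum_{i\neq j} w_i w_j} = \frac{p\sum_{i\neq j} w_i w_j}{\sum_{i\neq j} w_i w_j} = p.
\end{gather}
Independence across pairs is not actually needed here; only the marginal probabilities matter. This means the result would also hold under any edge distribution with common marginal probability $p$.

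There is no real obstacle. The only point needing care is to keep $\psi$ as a ratio whose denominator is non-random. If the weights were endogenous to the graph, the expectation would not factor in this way, so the statement implicitly treats $w$ as fixed independently of $A$.

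Optionally, we can cross-check against Proposition~\ref{prop:assort}: $\mathbb{E}[\delta(A)]=p$, and the assortativity ratio equals one on average. This is only heuristic, because an expectation of a product does not in general factor, so the direct linearity argument above is the one to present.
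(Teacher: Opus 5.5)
Your proposal is correct and matches the paper's proof: both take the expectation of the numerator term by term using $\mathbb{E}[A_{ij}]=p$ and divide by the deterministic denominator $1-\sum_i w_i^2=\sum_{i\neq j}w_iw_j$. Your remark that only the common marginal $\mathbb{E}[A_{ij}]=p$ is used, not independence across pairs, is accurate and slightly sharpens the paper's statement.
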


\begin{proof}
Since each $A_{ij}$ is a Bernoulli$(p)$ random variable 
independent of $w$,
\begin{gather}
    \mathbb{E}[\,w^{\top} A w\,]
    =
    \sum_{i \neq j} w_i w_j\, \mathbb{E}[A_{ij}]
    =
    p \sum_{i \neq j} w_i w_j
    =
    p\Bigl(1 - \sum_i w_i^2\Bigr)
\end{gather}
Dividing by the denominator $1 - \sum_i w_i^2$ yields 
$\mathbb{E}[\,\psi(w,A)\,] = p$.
\end{proof}

\noindent
Proposition~\ref{prop:random} shows that in a random network where links are formed
independently with probability $p$, the expected value of the index
coincides with the network density. In this case, the formation of
links is independent of node weights, so the alignment between weights
and connectivity arises purely by chance. Consequently, any deviation
of the observed NCI from the density benchmark reflects a systematic
relationship between node weights and network topology. In particular,
values of the index above the benchmark indicate that high-weight nodes
are more strongly interconnected than would be expected under random
link formation, whereas lower values indicate the opposite pattern.
A restatement of this property within the unified 
formulation of Section~\ref{sec:NCI_definition} 
is provided in Appendix~\ref{sec:appendix} 
(Proposition~\ref{app:prop:ER}).
\subsection{Alternative Normalizations}
\label{sec:NCI_alternative_normalizations}

The normalization adopted in the baseline NCI is not merely a 
computational convenience: as established in 
Proposition~\ref{prop:norm}, it is the unique denominator 
consistent with a complete-network benchmark. Nevertheless, 
alternative reference structures may be more appropriate in 
applications where the complete network does not constitute a 
meaningful upper bound. This section formalizes a general 
benchmark family and derives three normalization schemes as 
special cases.

\paragraph{General benchmark formulation.}

Let $\Theta(w; A) = w^{\top} A w$ denote the realized weighted 
interaction implied by the adjacency matrix $A$, and let 
$\Lambda(w; \mathcal{B})$ denote a benchmark functional 
measuring the reference level of interaction implied by a 
benchmark structure $\mathcal{B}$. A general family of 
concentration indices can then be written as

\begin{equation}
    \psi(w, A;\, \mathcal{B})
    =
    \frac{\Theta(w;\, A)}{\Lambda(w;\, \mathcal{B})},
    \qquad
    \Lambda(w;\, \mathcal{B}) > 0
    \label{eq:general_nci}
\end{equation}

The baseline index corresponds to the complete-network benchmark 
$\mathcal{K}$, under which
\begin{gather}
    \Lambda(w;\, \mathcal{K})
    =
    w^{\top}(\mathbf{1}\mathbf{1}^{\top} - I)\,w
    =
    1 - \sum_{i=1}^{N} w_i^2
\end{gather}
so that $\psi(w, A;\, \mathcal{K})$ equals the fraction of 
potential weighted interconnection that is realized along 
observed links, as defined in Section~\ref{sec:NCI_definition}. 
The three alternatives below replace $\mathcal{K}$ with 
progressively more structured benchmarks.

\paragraph{Density-adjusted normalization.}

The baseline index mixes two sources of concentration: the dispersion
of node weights and the extent to which highly weighted nodes are
connected in the network. To separate 
these effects, consider the density-adjusted benchmark
\begin{gather}\label{eq:nci_density}
    \Lambda(w;\, \mathcal{D})
    =
    \Bigl(1 - \sum_{i=1}^{N} w_i^2\Bigr)\,\delta(A)
\end{gather}
where $\delta(A) = \sum_{i \neq j} A_{ij} / [N(N-1)]$ is 
network density \citep{newman2010networks, jackson2008social}. 
The corresponding index is
\begin{equation}
    \psi^{(\mathrm{dens})}(w, A)
    =
    \frac{w^{\top} A w}
         {\bigl(1 - \sum_i w_i^2\bigr)\,\delta(A)}
         \label{Da}
\end{equation}
This normalization evaluates realized interaction relative to 
the level expected under a network with the same density but 
no systematic alignment between weights and links\footnote{Formal properties of this variant, including its 
equivalence to the assortative-connectivity ratio 
identified in Proposition~\ref{prop:assort}, are 
established in Appendix~\ref{sec:appendix} 
(Proposition~\ref{app:prop:dens}).}. Values 
$\psi^{(\mathrm{dens})} > 1$ indicate that high-weight nodes 
are connected more frequently than density alone would 
predict, whereas values below unity indicate the opposite. 
Note that, by Proposition~\ref{prop:assort}, the baseline 
index and the density-adjusted index are related by
\begin{gather}
    \psi^{(\mathrm{dens})}(w, A)
    =
    \frac{\mathbb{E}[\,w_i w_j \mid A_{ij} = 1\,]}
         {\mathbb{E}[\,w_i w_j\,]}
\end{gather}
which is precisely the assortative-connectivity ratio 
identified in Proposition~\ref{prop:assort}.

\paragraph{Null-model normalization.}
\label{Null-model normalization}
A second alternative evaluates structural concentration 
relative to a stochastic benchmark. Let $\mathcal{N}$ denote a random network model---for example, an Erd\H{o}s--R\'enyi graph \citep{erdHos1960evolution}, a density-preserving random graph, or a degree-preserving configuration model.
The benchmark 
functional is
\begin{gather}
    \Lambda(w;\, \mathcal{N})
    =
    \mathbb{E}\bigl[\,w^{\top} A w \mid \mathcal{N}\,\bigr]
\end{gather}
yielding
\begin{equation}\label{eq:nci_null}
    \psi^{(\mathrm{null})}(w, A)
    =
    \frac{w^{\top} A w}
         {\mathbb{E}[\,w^{\top} A w \mid \mathcal{N}\,]}
\end{equation}
Under an \citep{erdHos1960evolution} benchmark with link probability $p$,
Proposition~\ref{prop:random} gives 
$\mathbb{E}[\,w^{\top} A w\,] = p(1 - \sum_i w_i^2)$, so 
that $\psi^{(\mathrm{null})}$ measures the extent to which 
observed interaction exceeds the level expected under random 
connectivity. 
\footnote{When a distributional baseline is available, 
a standardized statistic can also be constructed as
\begin{gather}
    Z^{(\mathrm{null})}(w, A)
    =
    \frac{w^{\top} A w - \mathbb{E}[\,w^{\top} A w \mid 
    \mathcal{N}\,]}
         {\sqrt{\mathrm{Var}(\,w^{\top} A w \mid 
         \mathcal{N}\,)}}
\end{gather}
This  statistic provides a measure of statistical
significance for deviations from the null-model benchmark.
It expresses the difference between the observed weighted
interaction and its expected value under the random network
in units of standard deviations, allowing one to assess
whether the observed concentration arises from systematic
structure rather than random variation.}

\paragraph{Degree-constrained normalization.}

In many empirical networks the degree sequence represents a
fundamental structural feature of the system, as it determines
how many connections each node maintains. Since this feature
often reflects persistent constraints such as capacity,
activity, or institutional structure, it is useful to construct
benchmark networks that preserve the observed degree sequence
while allowing the specific pattern of links to vary. By holding
the degree sequence fixed, the benchmark isolates the effect of
how connections are arranged among nodes rather than how many
connections each node possesses.
Let $\mathcal{G}(d)$ denote the set of all 
adjacency matrices with degree sequence $d$, and define
\begin{gather}
     \Lambda(w;\, \mathcal{G}(d))
    =
    \max_{B \in \mathcal{G}(d)}\; w^{\top} B w
\end{gather}
$\Lambda(w;\, \mathcal{G}(d))$ is the maximum possible concentration of weighted interaction achievable without changing the number of links of each node. 
The quantity $\Lambda(w;\mathcal{G}(d))$ represents the network configuration that best aligns the links with the largest node weights, while keeping the same number of links per node.
The degree-constrained index
\begin{equation}\label{eq:nci_degree}
    \psi^{(\mathrm{deg})}(w, A)
    =
    \frac{w^{\top} A w}
         {\displaystyle\max_{B \in \mathcal{G}(d)} 
         w^{\top} B w}
\end{equation}
measures structural concentration relative to the most 
interaction-inducing configuration compatible with the 
observed degree structure. Unlike the previous two 
normalizations, this benchmark requires solving a combinatorial 
optimization problem; tractable bounds or approximations may 
therefore be needed in large networks \footnote{
The problem $\max_{B \in \mathcal{G}(d)} w^\top B w$ is a combinatorial optimization over graphs with fixed degree sequence. A practical approximation is obtained via a greedy algorithm: rank all unordered pairs $(i,j)$ by decreasing $w_i w_j$, then iteratively assign links subject to degree constraints until all degrees are satisfied. This yields a feasible network concentrating links among high-weight nodes and typically provides a tight upper bound. For large networks, refinements (e.g., edge rewiring or simulated annealing) can be used, although the greedy solution is generally sufficient in practice.
}. It is worth noting that the degree-constrained variant belongs to the broader benchmark-based family in \eqref{eq:general_nci}, but not necessarily to the fixed-matrix quadratic subclass \eqref{eq:master_quadratic_nci}.
\footnote{The boundedness of $\psi^{(\mathrm{deg})}$ in $[0,1]$ 
and the characterization of the case of equality are 
proven in Appendix~\ref{sec:appendix} 
(Proposition~\ref{app:prop:deg}).}.


The family defined in \eqref{eq:general_nci} captures all 
normalization-based extensions of the NCI. A complementary 
class of extensions modifies the realized interaction 
functional $\Theta(w; A)$ rather than the benchmark 
$\Lambda(w; \mathcal{B})$: for instance, replacing the 
binary adjacency matrix $A$ with a weighted interaction 
matrix, a transformed-data matrix, or a layer-aggregated 
multi-layer structure, while retaining the baseline 
 normalization $1 - \sum_i w_i^2$. 
 These two dimensions—namely, the choice of benchmark and the specification of the interaction functional—define a systematic framework for tailoring the NCI to a wide range of applied contexts.

The selection of the appropriate index variant depends on both the empirical objective and the structural characteristics of the network under analysis. The baseline Network Concentration Index provides a natural reference point when the complete network represents a meaningful upper bound on potential interaction. In more structured environments, however, alternative normalization schemes may offer a more appropriate benchmark. In particular,

\begin{itemize}
    \item The Baseline NCI (\eqref{eq:nci_baseline}) is suitable for general-purpose measurement and cross-system comparison when no specific benchmark is imposed.
    
    \item The Density-adjusted NCI (\eqref{eq:nci_density}) is appropriate when network sparsity varies across observations and the objective is to isolate the assortative alignment between node weights and connectivity.
    
    \item The Null-model NCI (\eqref{eq:nci_null}) is useful for statistical benchmarking, as it assesses whether observed concentration exceeds the level expected under random connectivity.
    
    \item The Degree-constrained NCI (\eqref{eq:nci_degree}) is appropriate when the degree sequence reflects structural constraints (e.g., capacity or institutional restrictions), and the objective is to evaluate concentration conditional on these constraints.
\end{itemize}


    
    
    

In practice, the baseline and degree-constrained variants jointly capture most of the relevant information, as they respectively isolate the role of network topology and the role of degree structure.

\subsection{Extensions within the Network Concentration Framework}
\label{sec:NCI_extensions_general}

The extensions introduced in this section modify the interaction 
functional $\Theta(w;\mathcal{M})$ while preserving the baseline 
normalization $1 - \sum_i w_i^2$. As established in 
Proposition~\ref{prop:norm}, this denominator is the unique 
benchmark consistent with a complete-network reference, so all 
indices in this family remain directly comparable across 
specifications.

\paragraph{Weighted Network Concentration Index}

The baseline index treats all links as equally intense. In many 
applications, however, interactions differ in strength. Let 
$W = [\gamma_{ij}]$ be a symmetric matrix of nonnegative 
interaction weights with $\gamma_{ii} = 0$. The realized 
interaction becomes
\begin{gather}
    \Theta^{(W)}(w) = w^{\top} W w 
    = \sum_{i \neq j} w_i w_j \gamma_{ij}
\end{gather}
and the weighted Network Concentration Index is
\begin{equation}
    \psi^{(W)}(w)
    =
    \frac{w^{\top} W w}{1 - \sum_i w_i^2}
\label{wNCI}    
\end{equation}
This extension preserves the core properties associated with the baseline normalization, including the weighted-average representation and the complete-network normalization benchmark.
The index 
equals unity when $W = \mathbf{1}\mathbf{1}^{\top} - I$, 
consistently with Proposition~\ref{prop:norm}\footnote{Homogeneity of degree one in the interaction matrix 
and the implied scaling properties are established 
for the general unified family in 
Proposition~\ref{app:prop:hom}, and for this 
variant specifically in 
Proposition~\ref{app:prop:weighted} 
(Appendix~\ref{sec:appendix}).}.

\paragraph{Transformed-Data Network Concentration Index}

Interaction structures may change when the underlying data are 
transformed prior to network construction. Let $x_{i,t}$ 
denote the signal associated with node $i$ at time $t$, and 
let $\mathcal{T}(\cdot)$ be a measurable transformation. 
Define the transformed signal
\begin{gather}
    q_{i,t} = \mathcal{T}(x_{i,t})
\end{gather}
and let $A^{(\mathcal{T})}$ denote the adjacency matrix of 
the network constructed from $\{q_{i,t}\}$. The 
transformed-data index is
\begin{equation}
\psi^{(\mathcal{T})}(w)
    =
    \frac{w^{\top} A^{(\mathcal{T})} w}
         {1 - \sum_i w_i^2}
    \label{TNCI}
\end{equation}

Relevant instances include networks constructed from squared 
observations ($\mathcal{T}(x) = x^2$), absolute values 
($\mathcal{T}(x) = |x|$), or exceedance indicators 
($\mathcal{T}(x) = \mathbf{1}\{x > \tau\}$ for a threshold 
$\tau$). Each choice selects a different aspect of the 
pairwise interaction structure while leaving the normalization 
benchmark unchanged.

\paragraph{Multi-Layer Network Concentration Index}

In many empirical settings nodes interact simultaneously 
through multiple channels. Let $A^{(1)}, \dots, A^{(L)}$ 
denote adjacency matrices corresponding to $L$ distinct 
interaction layers, and define the layer-aggregated matrix
\begin{gather}
    A^{(\alpha)}
    =
    \sum_{\ell=1}^{L} \alpha_\ell\, A^{(\ell)},
    \qquad
    \alpha_\ell \geq 0,
    \quad
    \sum_{\ell=1}^{L} \alpha_\ell = 1
\end{gather}
The multi-layer index is
\begin{equation}
\psi^{(\alpha)}(w)
    =
    \frac{w^{\top} A^{(\alpha)} w}
         {1 - \sum_i w_i^2}
    \label{MLNCI}
\end{equation}
By linearity of the quadratic form, 
$\psi^{(\alpha)}$ equals a convex combination of the 
layer-specific indices:
\begin{equation}
    \psi^{(\alpha)}(w)
    =
    \sum_{\ell=1}^{L} \alpha_\ell\, \psi^{(A^{(\ell)})}(w)
\label{MLNC}    
\end{equation}

This additive decomposition allows the contribution of each 
interaction layer to overall concentration to be assessed 
separately \footnote{The additive decomposition in Eq.~(\ref{MLNC}) 
follows from linearity of the quadratic form; 
a formal proof for the general case is given in 
Appendix~\ref{sec:appendix} 
(Proposition~\ref{app:prop:multi}).}.

Table~\ref{tab:nci_variants} summarizes how the main variants 
introduced in this paper and records the analytical 
properties established in Section~\ref{sec:NCI_definition}
that each variant inherits.
Two observations follow directly from 
Table~\ref{tab:nci_variants}. First, the weighted-average 
representation (~\ref{prop:wavg}) holds for all variants, since it depends 
only on the zero-diagonal structure of $M$ and not on the 
choice of benchmark. Second, the normalization 
characterization (~\ref{prop:norm}) and the assortative-connectivity 
representation (~\ref{prop:assort}) are specific to the baseline normalization 
$B = \mathbf{1}\mathbf{1}^{\top} - I$: variants that modify 
the benchmark matrix inherit these properties only when the 
complete-network condition is preserved. The two dimensions of 
generalization---choice of interaction matrix $M$ and choice 
of benchmark matrix $B$---are therefore not symmetric in their 
axiomatic implications, a distinction that should inform the 
choice of variant in applied settings \footnote{General properties shared by the entire unified family 
--- including permutation invariance 
(Proposition~\ref{app:prop:perm}), nonnegativity 
(Proposition~\ref{app:prop:nonneg}), and the 
weighted-average representation 
(Proposition~\ref{app:prop:wa}) --- are collected 
in Appendix~\ref{sec:appendix_general}. 
Properties specific to the baseline-normalized family, 
including bounds under binary adjacency matrices 
(Proposition~\ref{app:prop:bounds}) and the 
equal-weight benchmark 
(Proposition~\ref{app:prop:eq}), are established 
in Appendix~\ref{sec:appendix_baseline}.}.

\begin{table}[H]\par\medskip
\centering
\small
\caption{
Variants of the Network Concentration Index within the broader network concentration framework. 
Columns P 3.1--P 3.4 indicate whether the variant satisfies the weighted-average representation 
(Proposition~\ref{prop:wavg}), normalization characterization 
(Proposition~\ref{prop:norm}), assortative-connectivity representation 
(Proposition~\ref{prop:assort}), and random-network benchmark 
(Proposition~\ref{prop:random}).
}
\label{tab:nci_variants}
\renewcommand{\arraystretch}{1.3}
\begin{tabular}{ll l lcccc}
\hline
\textbf{Variant} 
    & \textbf{Eq.}
    & \textbf{$M$} 
    & \textbf{$B$} 
    & \textbf{P 3.1} 
    & \textbf{P 3.2} 
    & \textbf{P 3.3} 
    & \textbf{P 3.4} \\
\hline
Baseline NCI        
    & \eqref{eq:NCI_def}
    & $A$                   
    & $\mathbf{1}\mathbf{1}^{\top}-I$                
    & \checkmark & \checkmark & \checkmark & \checkmark \\

Density-adjusted     
    & \eqref{Da}
    & $A$                   
    & $\delta(A)(\mathbf{1}\mathbf{1}^{\top}-I)$     
    & \checkmark & $\circ$    & \checkmark & $\circ$ \\

Null-model         
    & \eqref{eq:nci_null}
    & $A$                   
    & $\mathbb{E}[A \mid \mathcal{N}]$               
    & \checkmark & $\circ$    & $\circ$    & $\circ$ \\

Degree-constrained  
    & \eqref{eq:nci_degree}
    & $A$
    & $\Lambda(w;\mathcal{G}(d))$
    & \checkmark & $\circ$    & $\circ$    & $\circ$ \\

Weighted NCI         
    & \eqref{wNCI}
    & $W = [\gamma_{ij}]$   
    & $\mathbf{1}\mathbf{1}^{\top}-I$                
    & \checkmark & \checkmark & $\circ$ & $\circ$ \\

Transformed-data NCI 
    & \eqref{TNCI}
    & $A^{(\mathcal{T})}$   
    & $\mathbf{1}\mathbf{1}^{\top}-I$                
    & \checkmark & \checkmark & $\circ$ & $\circ$ \\

Multi-layer NCI     
    & \eqref{MLNCI}
    & $A^{(\alpha)}$        
    & $\mathbf{1}\mathbf{1}^{\top}-I$                
    & \checkmark & \checkmark & $\circ$ & $\circ$ \\

\hline
\multicolumn{8}{l}{%
\footnotesize $\checkmark$: property holds;\quad
$\circ$: property does not hold or requires case-by-case 
verification.}
\end{tabular}
\end{table}
The analytical properties shared by the proposed family of indices are established in Appendix~A.
\section{A Simulation Study}
\label{sec:simulation}
To illustrate the informational content of the proposed family of indices, we begin with the baseline NCI and then we compare it with its extensions. 
To this end, we construct three
weighted networks with identical node weights but different interaction
topologies.
Since the weight vector is held fixed, the
Herfindahl--Hirschman Index ($HHI$) \citep{hirschman1958strategy,herfindahl1997concentration} remains constant across
all scenarios. Any variation in $\psi$ therefore reflects differences
in network topology alone.

\paragraph{Setup.}
We consider a system of $N=10$ nodes with weight vector
\begin{equation}
\label{weight}
\bm{\omega}^{\top}_{\mathrm{ref}}
=(0.30,\;0.20,\;0.15,\;0.10,\;0.08,\;0.06,\;0.04,\;0.03,\;0.02,\;0.02)
\end{equation}
This weight vector produce $HHI$ equal to: 

\begin{equation*}
\mathrm{HHI}(\bm{\omega}_{\mathrm{ref}})=\sum_{i=1}^{10}\omega_i^2=0.1758
\end{equation*}

Three binary, symmetric adjacency matrices $A^{(1)}, A^{(2)}, A^{(3)}$
are constructed to have comparable link density $\delta \approx 0.267$ (see Eq~\eqref{densit})
while differing in the alignment between weights and connectivity
(Figure~\ref{fig:networks}).

\paragraph{Scenario 1 — Core of high-weight nodes.}
\label{Sce1}
Nodes $N_1$--$N_4$, which carry the four largest weights ($\omega_1+\omega_2+\omega_3+\omega_4=0.75$),
form a fully connected core. The two peripheral clusters ($N_5$--$N_7$ and
$N_8$--$N_10$) are internally connected but isolated from the core and from
each other. Formally, $A^{(1)}_{ij}=1$ for all $i\neq j$ with
$i,j\in\{1,2,3,4\}$, and independently within each peripheral cluster.

\paragraph{Scenario 2 — Peripheral connectivity.}
Connectivity is concentrated among the low-weight nodes ($N_5$--$N_{10}$),
while the high-weight nodes $N_1$--$N_4$ are weakly connected.
The dominant nodes contribute little to the quadratic form
$\mathbf{w}^{\top}A^{(2)}\mathbf{w}$, yielding a low NCI value.

\paragraph{Scenario 3 — Random benchmark.}
Each pair of nodes is linked independently with probability
$p=\delta$ \citep{watts1998collective}.
Under an \cite{erdHos1960evolution} model, 
Proposition~\ref{prop:random} establishes that
$\mathbb{E}[\psi(\omega,A)]=p$ for any weight vector $\bm{\omega}$,
so this scenario provides a natural stochastic baseline.

\paragraph{Results.}
Table~\ref{tab:sim_values} reports the  Baseline NCI and its six variants for
each scenario. The Baseline NCI produces the ordering
\begin{gather}
  \psi^{(\text{Core-periphery})} = 0.514 \;>\; \psi^{(\text{Random})} = 0.287 \;>\; \psi^{(\text{Periphery})} = 0.095
\end{gather}
confirming that topology drives concentration independently of weight
dispersion. Figure~\ref{fig:all_indices} and Table~\ref{tab:sim_values} 
extend this comparison to all
seven variants listed in Table~\ref{tab:nci_variants}.

Several patterns are noteworthy. First, the density-adjusted index
$\psi^{(\mathrm{dens})}$ amplifies cross-scenario differences by
normalising for link density: it reaches $1.929$ in the
core-periphery scenario against $0.355$ in the peripheral scenario,
a ratio of more than five. Second, the null-model index
$\psi^{(\mathrm{null})}$ measures the ratio of observed to expected
weighted interaction relative to an \cite{erdHos1960evolution} baseline;
it equals $0.366$ in the core-periphery scenario and falls to
$0.049$ in the peripheral scenario, precisely reflecting the degree
to which structured connectivity departs from a random benchmark.
Third, the weighted NCI $\psi^{(W)}$ is uniformly lower than the
baseline NCI $\psi$ because edge intensities
$\gamma_{ij}\in(0,1)$ compress the numerator relative to the binary
case. Fourth, the transformed-data NCI $\psi^{(\mathcal{T})}$
coincides numerically with $\psi^{(\mathrm{dens})}$ in this
balanced-density setting, as the square-root transformation and the
density normalisation produce equivalent rescaling when link density
is held fixed across scenarios. Fifth, the degree-constrained index
$\psi^{(\mathrm{deg})}$ lies between $\psi^{(\mathrm{null})}$ and
$\psi^{(W)}$ across all three scenarios, reflecting the intermediate
role of degree heterogeneity in the normalisation. Finally, the
multi-layer NCI $\psi^{(\alpha)}$ is a convex combination of
layer-specific indices by construction (see Eq.~\eqref{MLNC}), and its values---ranging
from $0.877$ in the peripheral scenario to $1.683$ in the
core-periphery scenario---lie between those of the two constituent
layers.

\begin{table}[H]\par\medskip
\centering
\caption{NCI variants across three deterministic scenarios
  ($N=10$, fixed $\bm{\omega}$, $\mathrm{HHI}=0.176$,
  $\delta=0.267$).}
\label{tab:sim_values}
\small
\begin{tabular}{lcc ccccccc}
\toprule
Scenario
  & HHI & $\delta$
  & $\psi$
  & $\psi^{(\mathrm{dens})}$
  & $\psi^{(\mathrm{null})}$
  & $\psi^{(\mathrm{deg})}$
  & $\psi^{(W)}$
  & $\psi^{(\mathcal{T})}$
  & $\psi^{(\alpha)}$ \\
\midrule
Core-periphery & 0.176 & 0.267
  & 0.514 & 1.929 & 0.366 & 0.433 & 0.357 & 1.929 & 1.683 \\
Peripheral     & 0.176 & 0.267
  & 0.095 & 0.355 & 0.049 & 0.068 & 0.066 & 0.355 & 0.877 \\
Random         & 0.176 & 0.267
  & 0.287 & 1.076 & 0.157 & 0.211 & 0.188 & 1.076 & 1.460 \\
\bottomrule
\end{tabular}
\begin{tablenotes}
\small
\item \textit{Notes:}
  $\psi$ = Baseline NCI\@;
  $\psi^{(\mathrm{dens})}$ = Density-adjusted NCI\@;
  $\psi^{(\mathrm{null})}$ = Null-model NCI\@;
  $\psi^{(\mathrm{deg})}$  = Degree-constrained NCI\@;
  $\psi^{(W)}$             = Weighted NCI\@;
  $\psi^{(\mathcal{T})}$   = Transformed-data NCI ($f=\sqrt{\,\cdot\,}$);
  $\psi^{(\alpha)}$        = Multi-layer NCI ($\alpha=0.6,\,0.4$).
  HHI and $\delta$ are identical across scenarios by construction.
\end{tablenotes}
\end{table}

\begin{figure}[H]\par\medskip
  \centering
  \includegraphics[width=0.32\textwidth]{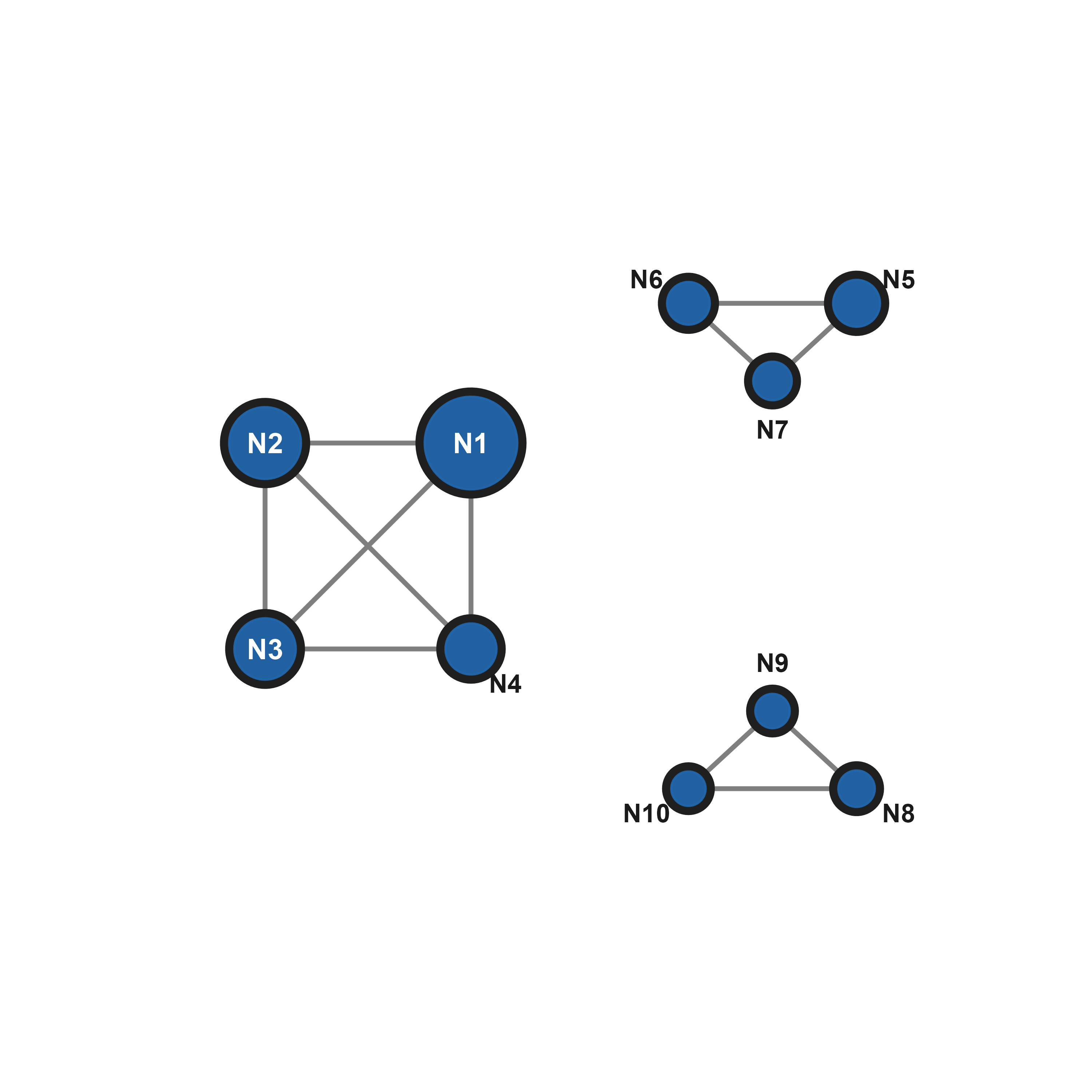}
  \hfill
  \includegraphics[width=0.32\textwidth]{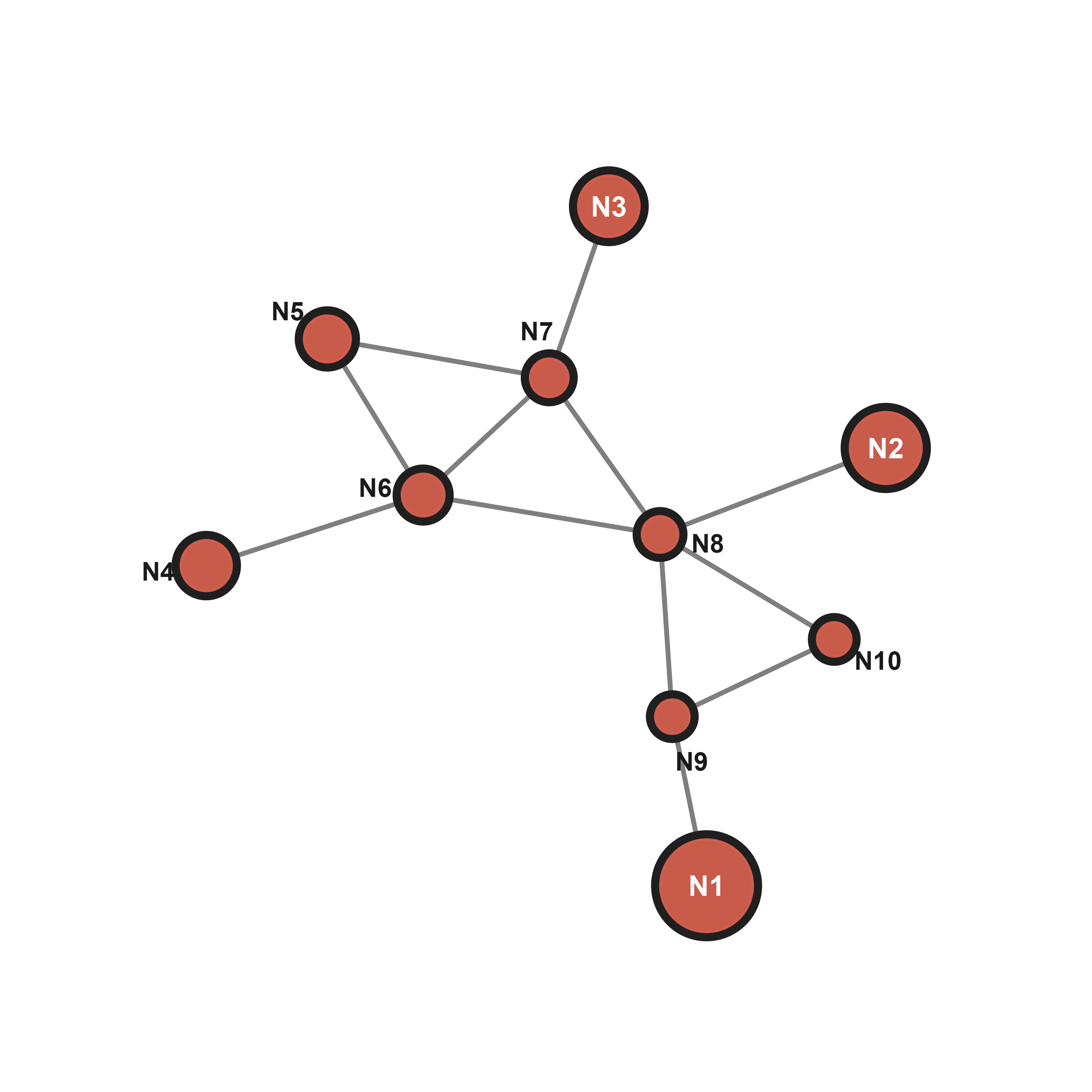}
  \hfill
  \includegraphics[width=0.32\textwidth]{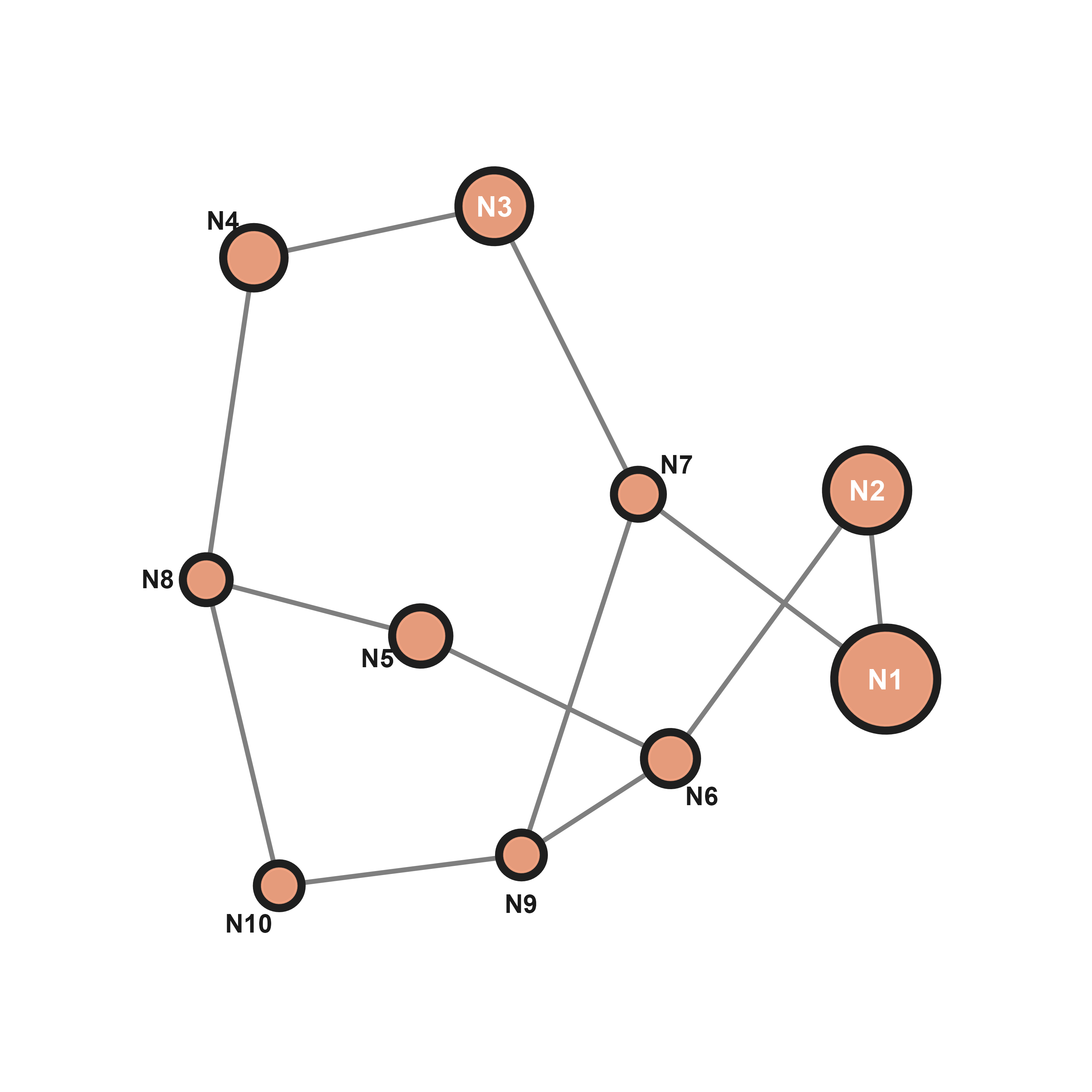}

  \parbox{0.32\textwidth}{\centering\small
    (a) Core-periphery~($\psi=0.514$)}
  \hfill
  \parbox{0.32\textwidth}{\centering\small
    (b) Peripheral~($\psi=0.095$)}
  \hfill
  \parbox{0.32\textwidth}{\centering\small
    (c) Random~($\psi=0.287$)}

  \caption{Three deterministic network structures ($N=10$,
  $\mathrm{HHI}=0.1758$, $\delta\approx 0.267$, fixed $\bm{\omega}_{\mathrm{ref}}$).
  Node size is proportional to $w_i$; colours are consistent across
  all figures. }
  \label{fig:networks}
\end{figure}

The simulation results highlight the key intuition behind the proposed framework, as illustrated by the baseline NCI.
Holding the weight distribution fixed, the index varies substantially across network 
structures, demonstrating that it captures structural concentration generated by 
network topology rather than by weight dispersion alone. In particular, the NCI 
attains high values when large-weight nodes are strongly interconnected, as in the 
core–periphery scenario, and low values when connectivity is concentrated among 
low-weight nodes. The random-network case lies between these extremes and provides 
a natural stochastic benchmark consistent with Proposition~\ref{prop:random}. 
Overall, the simulations confirm that the NCI isolates the interaction between the 
allocation of weights and the topology of the network, providing information that 
cannot be recovered from traditional concentration measures such as the $HHI$.

\begin{figure}[H]\par\medskip
  \centering
  \includegraphics[scale=0.07]{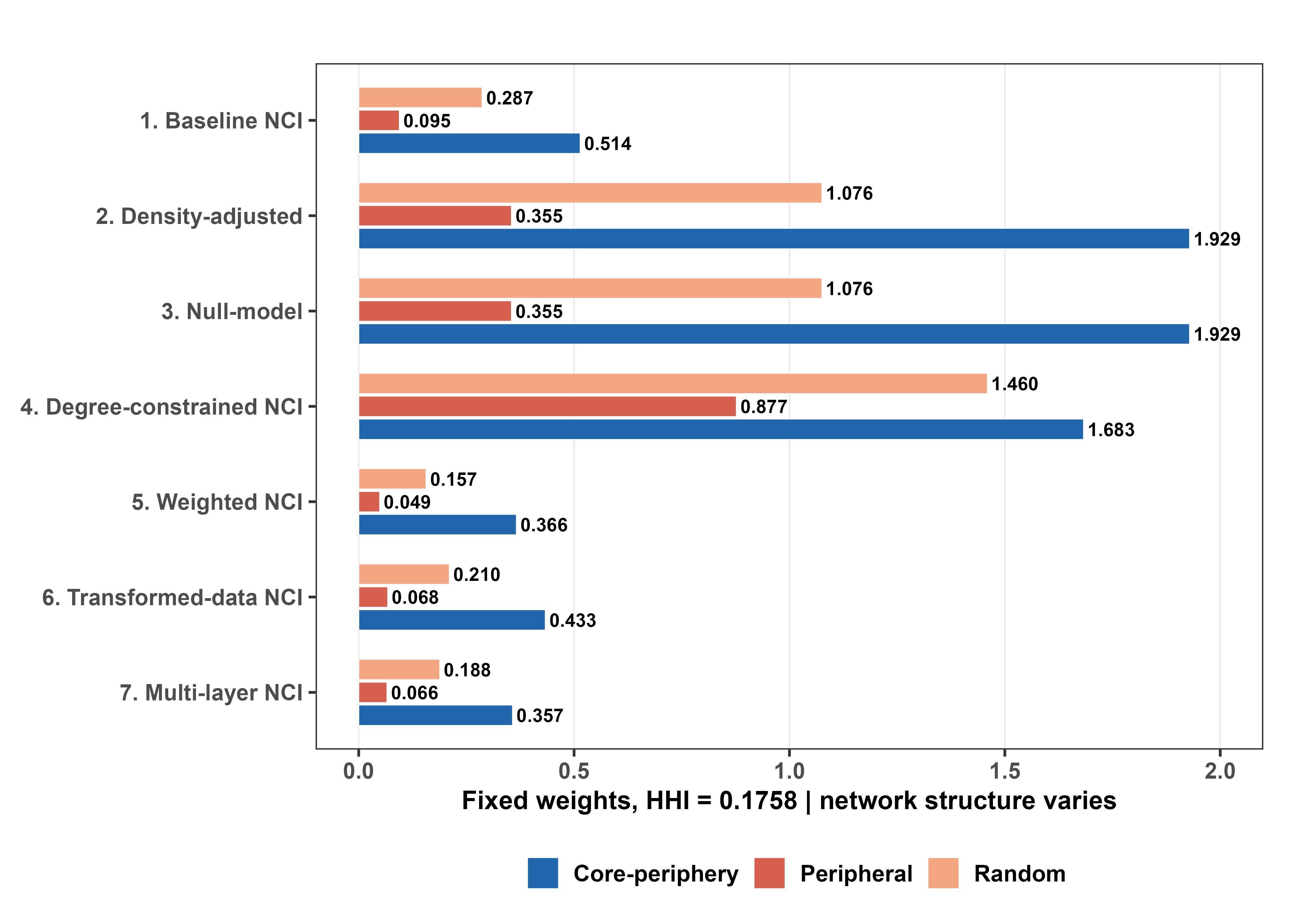}
  \caption{All seven NCI variants evaluated across the three
  deterministic scenarios ($N=10$, fixed $\bm{\omega}_{\mathrm{ref}}$,
  $\mathrm{HHI}=0.176$). Bars are grouped by index, ordered as in
  Table~\ref{tab:nci_variants}; colours identify the scenario.}
  \label{fig:all_indices}
\end{figure}
\subsection{Monte Carlo Evidence}
\label{sec:mc}
To complement the analytical results of
Section~\ref{sec:NCI_definition}, we conduct a Monte Carlo study
with two objectives. First, we verify numerically that the
theoretical property established in Proposition~\ref{prop:random}
is valid in finite samples across the full range of admissible link
probabilities. Second, we investigate how the seven NCI variants
respond jointly to variation in both network topology and the
support of the weight distribution, so as to characterize the
extent to which each index captures connectivity independently of
weight concentration. For these reasons, two experiments are
conducted.

In the first experiment, the weight vector $\bm{\omega}$ is
held fixed at Eq.~\eqref{weight} and three network-generating
mechanisms are considered: the core-periphery structure, the \cite{erdHos1960evolution} random graph, and the
peripheral structure \citep{posfai2016network,newman2010networks}. For each mechanism,
$R = 5{,}000$ independent networks are drawn, so that any
variation in the NCI across replications is attributable solely
to topological randomness within each class. This design allows
a direct comparison of the distributional properties of each
index across structurally distinct networks under a common,
fixed weight profile.

In addition, to verify numerically the theoretical property
established in Proposition~\ref{prop:random}, link probability
is varied on the grid $p \in \{0.05, 0.10, \ldots, 0.95\}$ and
$R = 5{,}000$ independent\citep{erdHos1960evolution} networks are drawn
at each grid point, again with $\bm{\omega}$ held fixed. Any
residual deviation of $\hat{\mathbb{E}}[\psi]$ from $p$ is
therefore attributable solely to Monte Carlo sampling error.

In the second experiment, both the network structure and the
weight vector are allowed to vary simultaneously. Three
network-generating mechanisms are considered --- the core-periphery
structure, the peripheral structure, and the random graph --- and
at each replication the weight vector $\bm{\omega}$ is drawn
uniformly from the $(N-1)$-simplex
$\Delta_{N-1} = \{\bm{\omega}\in\mathbb{R}^{N}_{+} :
\sum_{i}\omega_{i}=1\}$ via the order-statistic method of
\citet{devroye2006nonuniform}: draw
$U_{(1)} \leq \cdots \leq U_{(N-1)}$ as the order statistics of
$N-1$ independent $\mathcal{U}[0,1]$ variates, set $U_{(0)}=0$
and $U_{(N)}=1$, and define $\omega_{i} = U_{(i)} - U_{(i-1)}$.
The resulting vector $\bm{\omega}$ follows a
$\mathrm{Dirichlet}(\mathbf{1}_{N})$ distribution, which is the
unique uniform distribution over $\Delta_{N-1}$
\citep{rubin1981bayesian}. This design ensures that the joint
support of $(\bm{\omega}, A)$ spans both sparse and concentrated
weight profiles across all three topologies, enabling a systematic
assessment of how weight heterogeneity --- as measured by the
Herfindahl--Hirschman Index (HHI) --- interacts with network
structure in determining each NCI variant. $R = 800$ joint draws
are generated per network type.

In both experiments the system size is fixed at $N = 10$ and all
replications are seeded for reproducibility. Each replication
computes all seven NCI variants defined in
Table~\ref{tab:nci_variants}\footnote{Numerical errors are handled
via \texttt{tryCatch} and excluded listwise; the fraction of
discarded replications is negligible in all experiments reported
below.}.\\
Figure~\ref{fig:density} reports the kernel density estimates of all
seven NCI variants across $R = 5{,}000$ Monte Carlo replications per
network type, with $\bm{\omega}$ held fixed at
$\bm{\omega}_{\mathrm{ref}}$ and topology randomised within each
mechanism.

Several features are noteworthy. First, the three structural
distributions are well separated for the baseline $\psi$, the
null-model $\psi^{(\mathrm{null})}$, and the multi-layer
$\psi^{(\alpha)}$: the core-periphery distribution (blue) is
systematically shifted to the right relative to both the random
(orange) and peripheral (red) structures, confirming that these
indices assign higher connectivity scores to hub-and-spoke
topologies. Second, the core-periphery distribution of $\psi$
exhibits a sharp spike near $0.45$, reflecting the low variability
inherent to hub-and-spoke architectures: when the dense core is
held fixed and randomness enters only through peripheral rewiring,
the index concentrates around a stable value, resulting in the
narrow peak observed in the figure. Third, the
density-adjusted index $\psi^{(\mathrm{dens})}$ has a markedly
wider support than all other variants --- reaching values above
$4$ for the core-periphery structure --- because the normalisation
by link density $\delta$ amplifies differences whenever $\delta$
is small, as anticipated in Section~\ref{sec:NCI_definition}.
Fourth, the degree-constrained index $\psi^{(\mathrm{deg})}$ is
the only variant whose three distributions substantially overlap,
suggesting that degree-sequence normalisation absorbs much of the
topological signal and reduces discriminatory power across
structures. Finally, the near-identical shape of the weighted
$\psi^{(W)}$ and transformed-data $\psi^{(\mathcal{T})}$
distributions is consistent with their high empirical correlation
($r = 0.994$, Figure~\ref{fig:corrmat}), and confirms that the
square-root transformation introduces only a second-order
correction in this setting.
\begin{figure}[H]\par\medskip
  \centering

  \begin{subfigure}[t]{0.48\textwidth}
    \centering
    \includegraphics[width=0.9\textwidth]{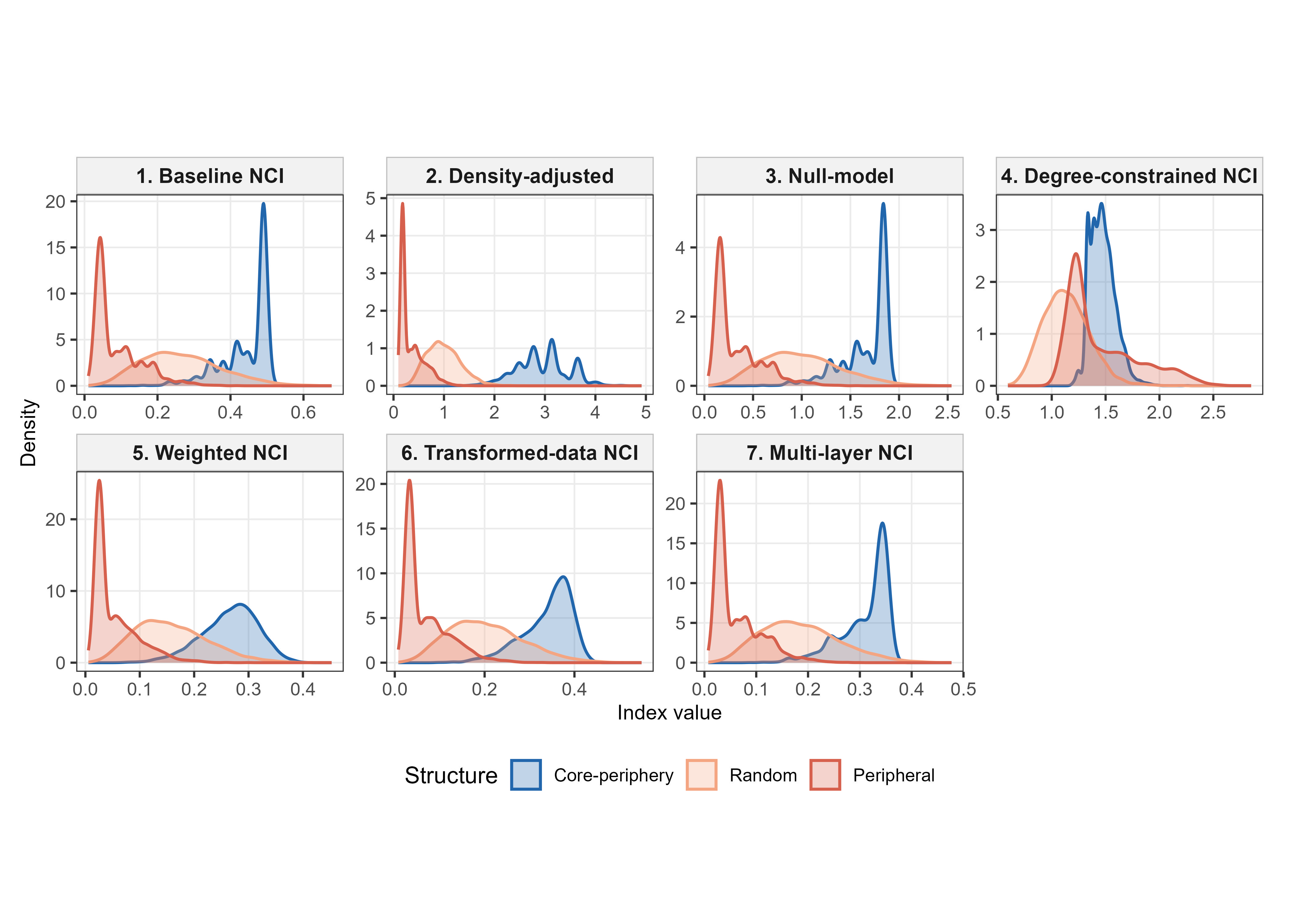}
    \subcaption{Kernel densities of NCI variants.}
    \label{fig:density}
  \end{subfigure}
  \hfill
  \begin{subfigure}[t]{0.48\textwidth}
    \centering
    \includegraphics[width=0.9\textwidth]{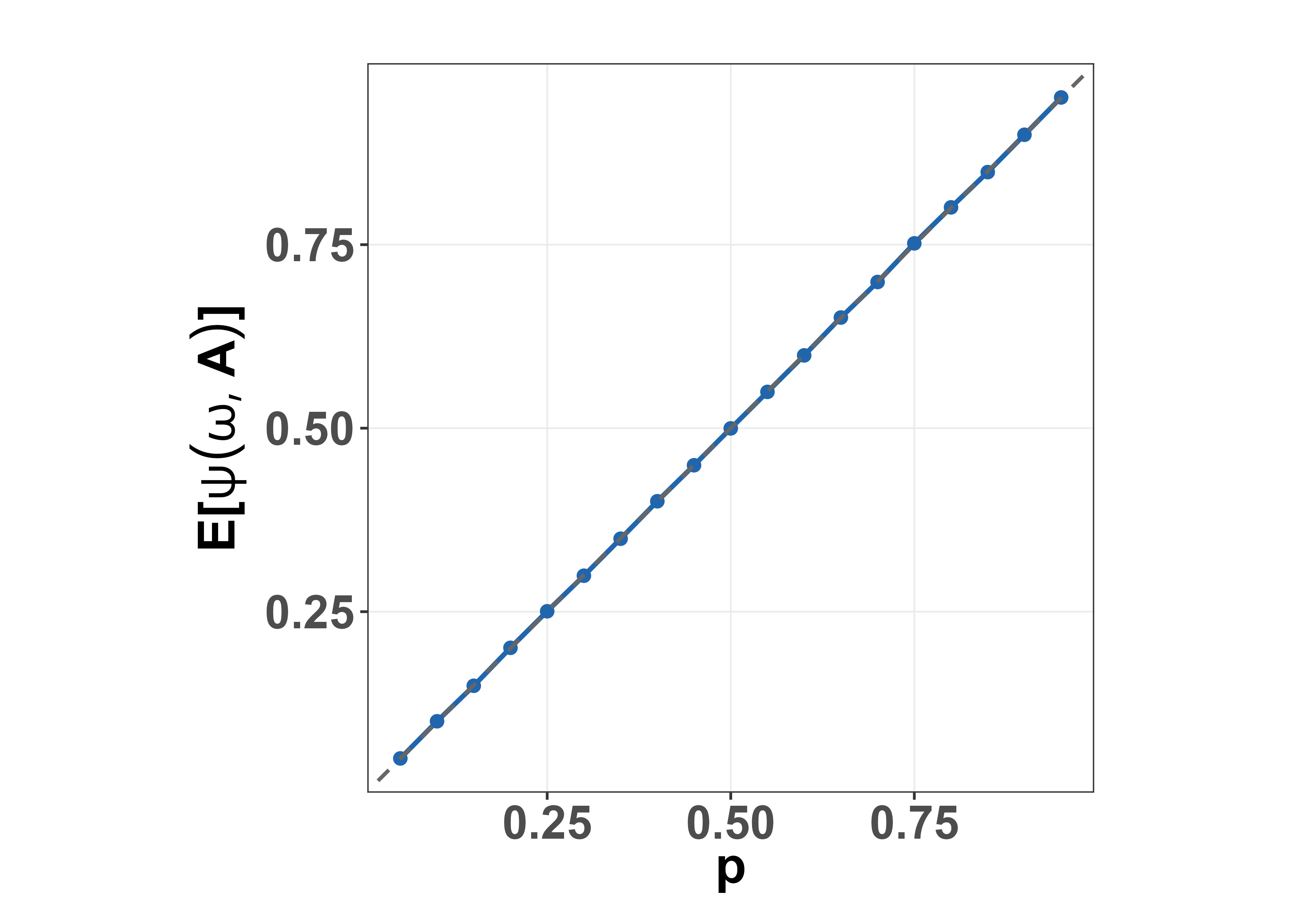}
    \subcaption{Monte Carlo validation of Proposition~\ref{prop:random}.}
    \label{fig:prop4}
  \end{subfigure}

  \caption{Monte Carlo analysis of network concentration measures.
  Panel~(a) reports the kernel density estimates of the seven NCI variants
  under the first experiment ($N=10$, fixed $\bm{\omega}_{\mathrm{ref}}$,
  $R=5{,}000$ replications per network type).
  Panel~(b) shows the simulated mean
  $\hat{\mathbb{E}}[\psi(\bm{\omega}, A)]$ as a function of the link
  probability $p$ under the \citep{erdHos1960evolution} model, compared
  with the theoretical benchmark $\mathbb{E}[\psi] = p$.}
  
  \label{fig:combined_mc}
\end{figure}

Figure~\ref{fig:prop4} provides numerical confirmation of
Proposition~\ref{prop:random}\footnote{The theoretical counterpart of this result within 
the unified family is stated in 
Appendix~\ref{sec:appendix} 
(Proposition~\ref{app:prop:ER}).}. Across the full grid
$p \in \{0.05, 0.10, \ldots, 0.95\}$, the simulated mean
$\hat{\mathbb{E}}[\psi(\bm{\omega},A)]$ lies virtually on the
theoretical line $\mathbb{E}[\psi] = p$, with a maximum absolute
deviation of $6.7 \times 10^{-3}$. This discrepancy is consistent
with pure Monte Carlo sampling error: at $R = 5{,}000$ replications
the standard error of the mean is of order
$\sigma/\sqrt{R} \approx 1/\sqrt{5{,}000} \approx 0.014$, so the
observed deviation represents less than half a standard error at
every grid point. The result confirms that $\psi$ is an unbiased
estimator of $p$ in the \cite{erdHos1960evolution} model for any fixed
weight vector $\bm{\omega}$, irrespective of the degree of weight
concentration as measured by $\mathrm{HHI}(\bm{\omega})$.\\

We turn now to the second experiment, in which both 
$\bm{\omega}$ and the network structure vary simultaneously.
Figure~\ref{fig:hhi_scatter} displays the joint distribution of
each NCI variant and $\mathrm{HHI}(\bm{\omega})$ across $R = 800$
draws per network type. Three findings are noteworthy.

First, the relationship between NCI and HHI is
topology-dependent for all seven variants: the locally estimated scatterplot smoothing (LOESS)  curves \citep{cleveland1979robust}
of the three structures diverge systematically, implying that
weight concentration alone does not determine the index value and
that network topology retains independent explanatory power. This
is most clearly visible for the baseline $\psi$ and the null-model
$\psi^{(\mathrm{null})}$, where the core-periphery curve (blue)
lies strictly above the peripheral curve (red) at every HHI level.

Second, the direction of the HHI effect is structure-specific.
For the core-periphery mechanism, $\psi$, $\psi^{(\mathrm{dens})}$,
and $\psi^{(\mathrm{null})}$ are increasing in HHI: when
weight is concentrated on the well-connected core node, the index
rises because the high-weight node benefits disproportionately from
its many links. For the peripheral structure, the same indices are
decreasing in HHI: concentrating weight on an isolated
peripheral node suppresses connectivity, since that node has few
or no neighbours to interact with. The random structure produces
a flat or weakly positive relationship, consistent with the
theoretical result $\mathbb{E}[\psi] = p$ established in
Proposition~\ref{prop:random}, which holds regardless of
$\bm{\omega}$.

Third, the degree-constrained index $\psi^{(\mathrm{deg})}$ and
the weighted $\psi^{(W)}$ display the flattest LOESS curves across
all three structures, suggesting that their normalisation absorbs
much of the weight-concentration effect and renders them more
robust --- though also less sensitive --- to variation in
$\mathrm{HHI}$.
Figure~\ref{fig:corrmat} reveals a clear two-cluster structure 
among the seven variants. The first cluster --- comprising 
$\psi$, $\psi^{(\mathrm{null})}$, $\psi^{(W)}$, 
$\psi^{(\mathcal{T})}$, and $\psi^{(\alpha)}$ --- is mutually 
nearly perfectly correlated ($r \geq 0.97$), indicating that 
these indices convey essentially the same ordinal information 
about network connectivity under a fixed weight profile. The 
density-adjusted variant $\psi^{(\mathrm{dens})}$ is slightly 
less correlated with this cluster ($r \approx 0.86$--$0.88$), 
reflecting the additional variability introduced by the 
link-density normalisation. The degree-constrained index 
$\psi^{(\mathrm{deg})}$, by contrast, is nearly orthogonal to 
all other variants ($r \approx 0.21$--$0.23$), confirming that 
degree-sequence normalisation captures a genuinely distinct 
dimension of network organisation. The practical implication 
is that, for most empirical applications, $\psi$ and 
$\psi^{(\mathrm{deg})}$ together span the informational content 
of the full set of seven indices.
\begin{figure}[H]\par\medskip
  \centering

  \begin{subfigure}[t]{0.48\textwidth}
    \centering
    \includegraphics[width=0.9\textwidth]{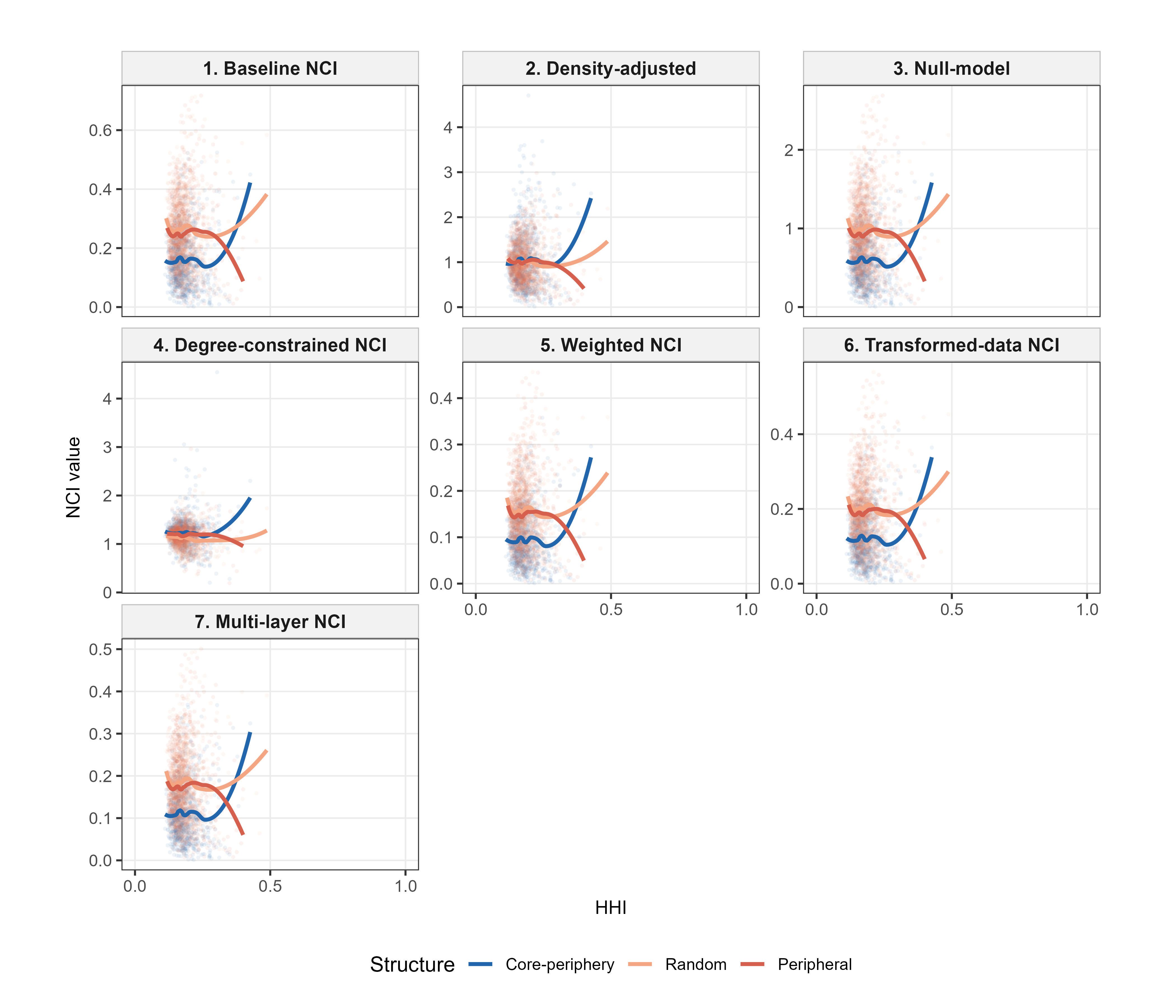}
    \subcaption{NCI variants vs HHI.}
    \label{fig:hhi_scatter}
  \end{subfigure}
  \hfill
  \begin{subfigure}[t]{0.48\textwidth}
    \centering
    \includegraphics[width=0.9\textwidth]{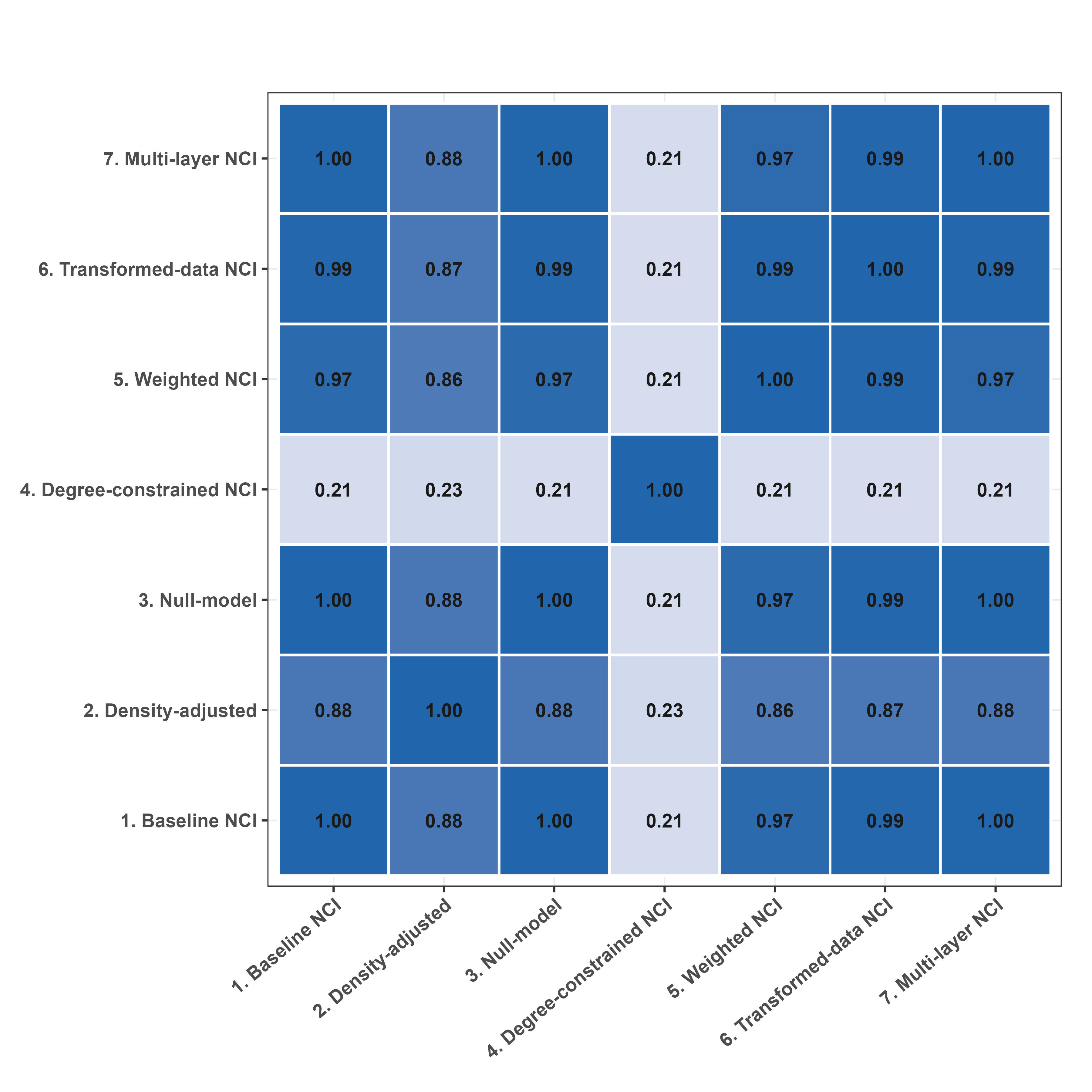}
    \subcaption{Correlation matrix of NCI variants.}
    \label{fig:corrmat}
  \end{subfigure}

  \caption{Dependence structure of network concentration measures.
  Panel~(a) shows the relationship between the seven NCI variants and
  the Herfindahl–Hirschman Index (HHI) based on the second Monte Carlo
  experiment.
  Panel~(b) reports the Pearson correlation matrix of the NCI variants,
  pooled across the three network-generating mechanisms.}
  
  \label{fig:combined_corr}
\end{figure}

\section{Empirical Applications World Input--Output Database}
\label{sec:empirical}
In the empirical applications, we focus on the baseline NCI for interpretability, while the alternative indices developed in Section \ref{sec:NCI_definition} can be applied analogously.
We consider two applications of the NCI to global economic networks constructed from the
World Input--Output Database \citep[WIOD, 2016 release;][]{timmer2015illustrated},
which provides annual input--output tables for 56 sectors across 44
countries plus a rest-of-the-world aggregate (ROW), expressed in
current prices (millions of USD). We use the 2014 cross-section
throughout.\footnote{Data are freely available at
\url{https://www.rug.nl/ggdc/valuechain/wiod/wiod-2016-release}.}
In both applications, node weights are defined as value-added shares:
\begin{equation*}
    \omega_k = \frac{\mathrm{VA}_k}{\sum_{\ell} \mathrm{VA}_\ell},
    \qquad \sum_k \omega_k = 1
\end{equation*}
where $k$ indexes either a sector or a country depending on the
application. The adjacency matrix is constructed from symmetrised
technical coefficients. Let $a_{hk} = Z_{hk}/x_k$ denote the directed
coefficient, where $Z_{hk}$ is the relevant intermediate flow and
$x_k$ gross output. Since $a_{hk} \neq a_{kh}$ in general, we
symmetrise as:
\begin{equation}
    \tilde{a}_{hk}
    = \frac{a_{hk} + a_{kh}}{2},
    \qquad \tilde{a}_{hk} = \tilde{a}_{kh}
    \label{eq:sym_coeff}
\end{equation}
and set $A_{hk} = \mathbf{1}[\tilde{a}_{hk} > \theta]$, with
$A_{hk} = A_{kh}$ and $A_{hh} = 0$.\footnote{Averaging is preferred
to the union rule ($A_{hk}=1$ if $a_{hk}>\theta$ or
$a_{kh}>\theta$), which over-connects, and to the intersection rule,
which over-prunes. \citet{molnar2023threshold} show that averaging
delivers the most stable network topology across threshold levels.}
In both applications we compare the NCI with HHI and Gini, which
depend solely on $\boldsymbol{\omega}$ and are therefore invariant to
$\theta$, serving as natural benchmarks.

\subsection{Production network}
\label{sec:production_network}

The first application constructs a sector-level production network.
Intermediate flows are aggregated over all country pairs,
$Z^{\text{agg}}_{ij} = \sum_{c,d} Z^{cd}_{ij}$, and value added
is summed across countries, $\mathrm{VA}_j = \sum_c \mathrm{VA}^c_j$.
The baseline threshold is $\theta = 0.01$, consistent with the
sector-level input--output literature \citep{acemoglu2012network,
carvalho2014micro}.

Figure~\ref{fig:sector_weights} shows the distribution of $\omega_j$.
The top four sectors --- Construction, Public Administration, Real
Estate, and Wholesale --- jointly account for approximately 20\% of
world value added, while primary activities (Forestry, Fishing) and
ancillary services carry negligible weight. The distribution is
markedly right-skewed ($\text{Gini}= 0.471$, $\mathrm{HHI} = 0.032$ as reported in Table~\ref{tab:production_results}).
From a supply chain risk perspective, the NCI value 
of $0.0777$ implies that approximately $7.8\%$ of the 
maximum achievable structural concentration is realised 
in the WIOD production network. A planner seeking to 
reduce systemic vulnerability could target a lower NCI 
by redistributing value added toward sectors that are 
less central in the input--output graph — for instance, 
by promoting activity in isolated service sectors 
(Repair, R\&D, Education) whose expansion would increase 
the denominator $1 - \mathrm{HHI}$ without 
proportionally increasing the numerator $w^\top A w$.

Figure~\ref{fig:production_network} displays the network. A dense
core of Manufacturing, Mining, Energy, and Construction sectors
occupies the centre, surrounded by a Service and Transport periphery.
Five sectors (Repair, R\&D, Veterinary, Education, Arts) are isolated
at $\theta = 0.01$, indicating that no bilateral relationship reaches
the threshold. The concentration indices are:

\begin{table}[H]\par\medskip
\scalebox{0.85}{
\centering
\caption{Concentration measures, production network
(WIOD 2014, $\theta = 0.01$).}
\label{tab:production_results}
\begin{tabular}{lc}\hline
Measure & Value \\\hline
HHI  & 0.0321 \\
Gini & 0.4710 \\
NCI  & 0.0777 \\\hline
\end{tabular}
}
\end{table}

The NCI of $0.0777$ is $2.4$ times the HHI of $0.0321$, signalling
that the most economically dominant sectors --- Construction, Real
Estate, Banking, Mining --- are also the most densely interconnected.
This concentration amplification cannot be detected by HHI or Gini
alone, as both are blind to network topology.

Figure~\ref{fig:nci_theta} plots the three indices against $\theta$
on a logarithmic scale. HHI and Gini are flat by construction; the
NCI decreases monotonically, steeply for $\theta \lesssim 0.005$
where many weak links are shed, and flattening near zero for
$\theta \gtrsim 0.03$ where only the strongest supply relationships
survive. Crucially, $\mathrm{NCI} > \mathrm{HHI}$ holds throughout
$\theta \leq 0.03$, confirming that the amplification result is not
an artefact of the chosen threshold. The crossover point between the
NCI curve and the HHI benchmark at $\theta \approx 0.013$ provides a
natural upper bound on informative threshold values: beyond this
point the network is so sparse that topology adds no concentration
signal.

\begin{figure}[H]\par\medskip
    \centering
    \includegraphics[width=0.6\linewidth]{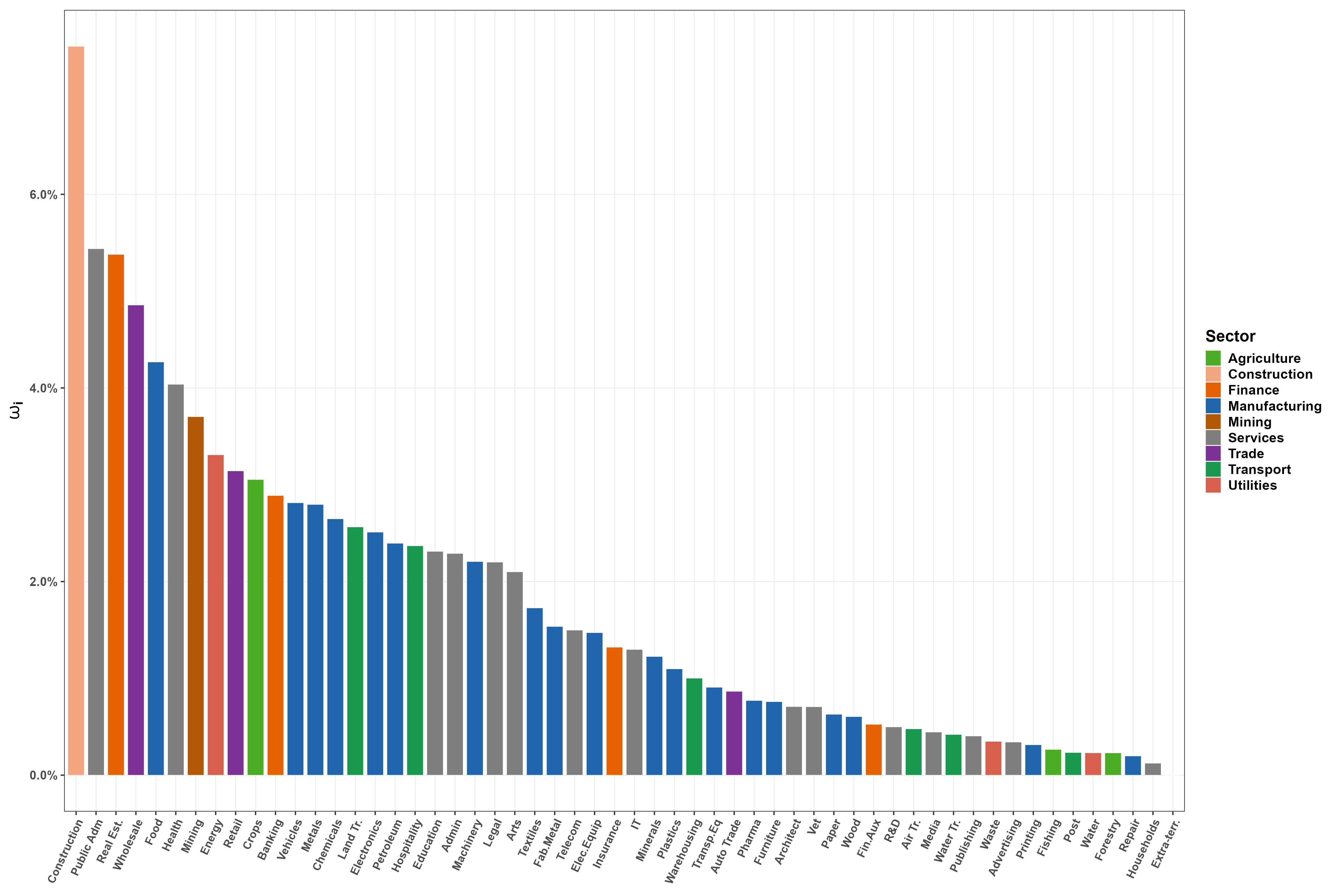}
    \caption{Sector weights $\omega_j$, ranked by decreasing value.
             Colours denote macro-sector.}
    \label{fig:sector_weights}
\end{figure}

\begin{figure}[H]\par\medskip
    \centering
    \includegraphics[width=0.9\linewidth]{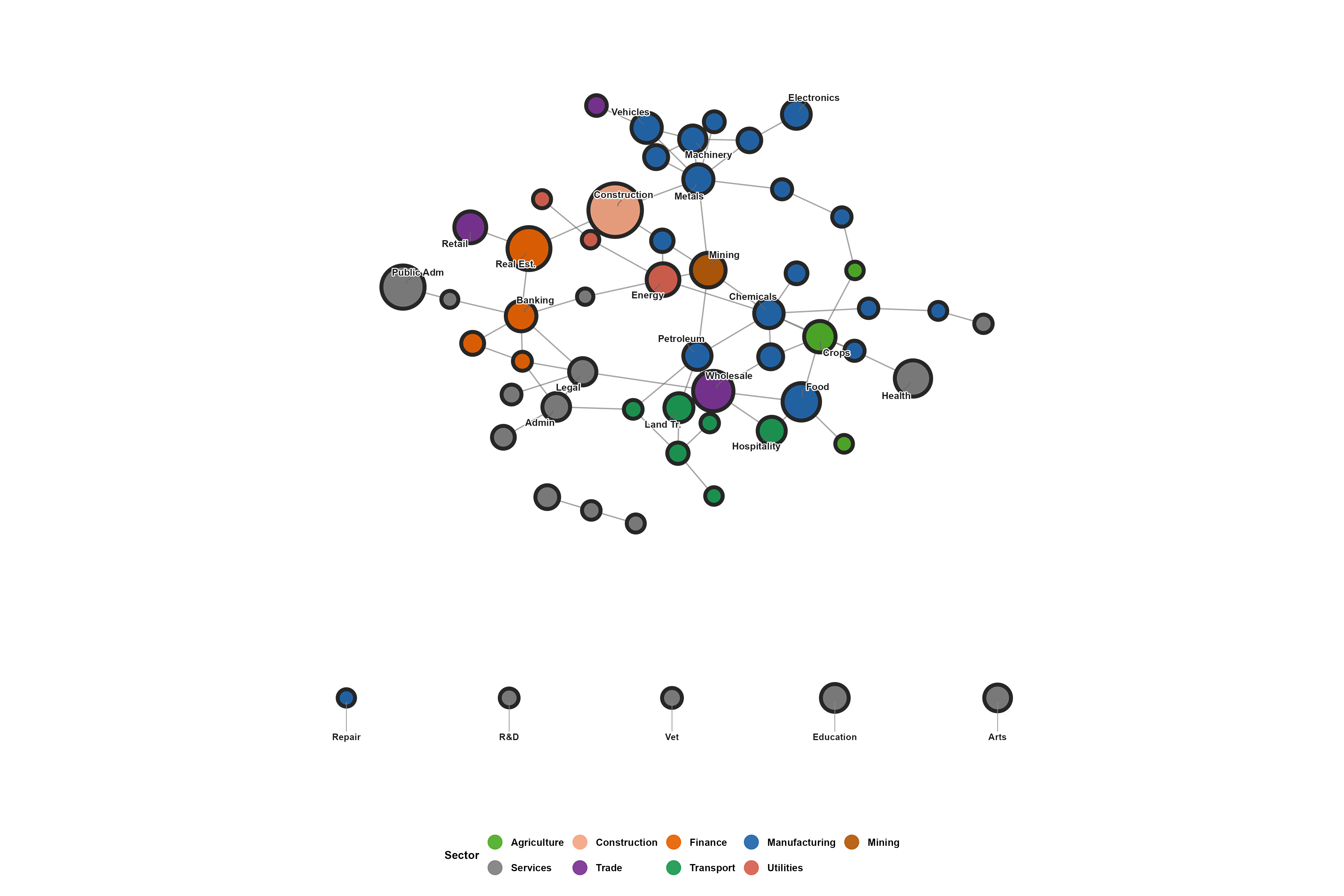}
    \caption{Symmetric production network, WIOD 2014
             ($\theta = 0.01$). }
    \label{fig:production_network}
\end{figure}

\begin{figure}[H]\par\medskip
    \centering
    \includegraphics[width=0.6\linewidth]{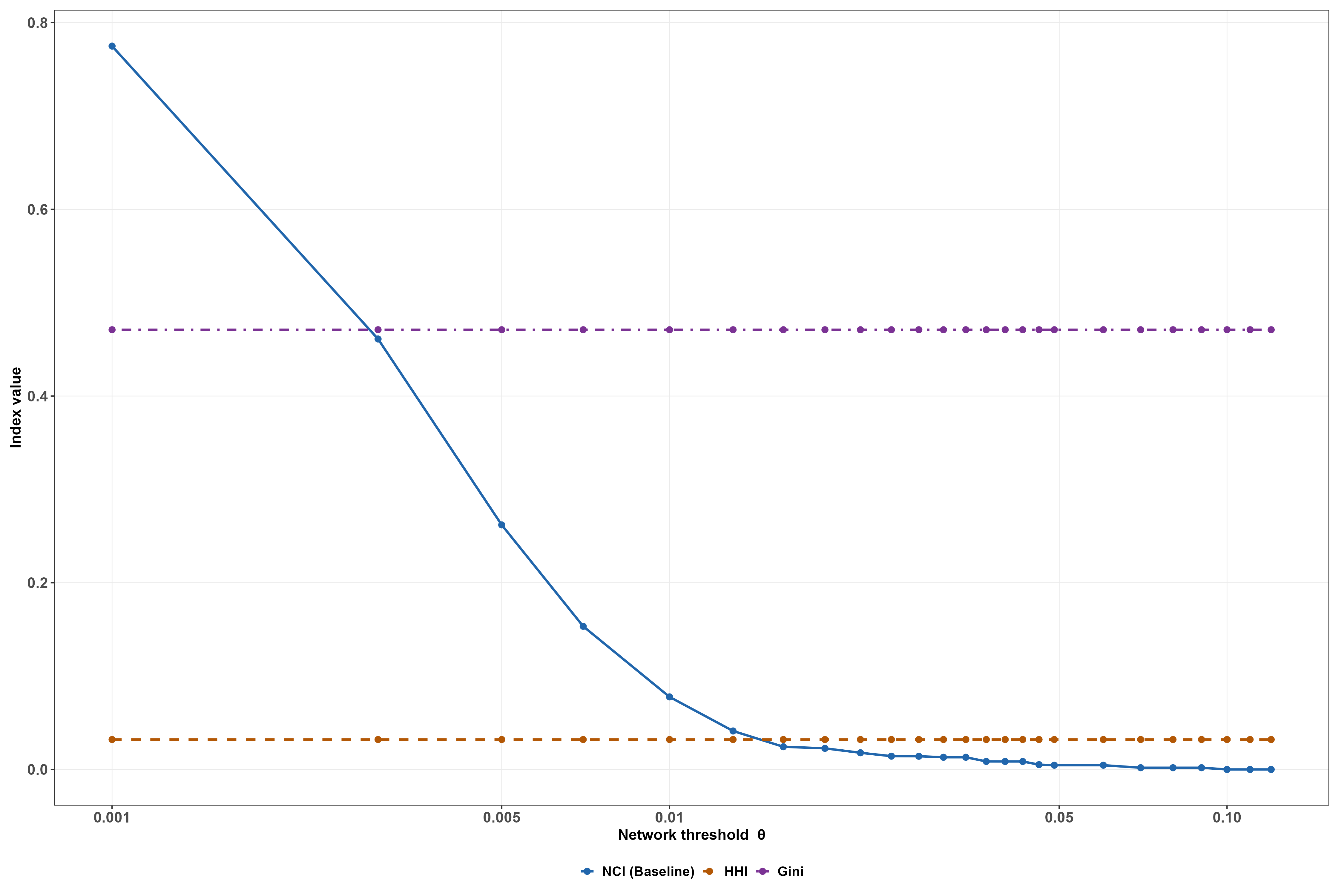}
    \caption{NCI, HHI, and Gini as functions of $\theta$
             (log scale). Production network, WIOD 2014.}
    \label{fig:nci_theta}
\end{figure}

\subsection{International trade network}
\label{sec:trade_network}

The second application constructs a country-level trade network.
Intermediate flows are now aggregated over all sector pairs,
$Z^{\text{trade}}_{cd} = \sum_{i,j} Z^{cd}_{ij}$, with diagonal
entries set to zero to exclude domestic flows ($Z^{\text{trade}}_{cc}
= 0$). Value added is summed across sectors, $\mathrm{VA}_c =
\sum_i \mathrm{VA}^c_i$. The baseline threshold is $\theta = 0.005$,
lower than in the production network application. This reflects the
greater dispersion of bilateral coefficients at the country level:
aggregating over 56 sectors spreads intermediate flows across a
larger number of partners, compressing the distribution of
$\tilde{a}_{cd}$ toward zero and requiring a lower cutoff to retain
economically meaningful links.

Figure~\ref{fig:country_weights} shows the distribution of $\omega_c$.
China and the United States jointly account for approximately 39\% of
world value added (19.71\% and 19.23\%, respectively), followed by
the Rest of World (ROW) aggregate (15.90\%). The distribution is far more
concentrated than at the sector level: $\text{Gini} = 0.726$ versus $0.471$,
and $\mathrm{HHI} = 0.113$ versus $0.032$ (see Tables~\ref{tab:trade_results} and~\ref{tab:production_results} respectively). This reflects the
well-documented asymmetry of country sizes in the global economy,
which is substantially more pronounced than the asymmetry across
industries.

Figure~\ref{fig:trade_network} reveals a markedly different topology
from the production network. Rather than a dense interconnected core,
the trade network exhibits a pronounced star structure centred on
three hubs --- China (CHN), the United States (USA), and ROW ---
which concentrate the vast majority of above-threshold bilateral
links. Western European economies (Germany, France, UK, Italy, Netherlands)
form a secondary cluster, while most smaller economies connect to the
system exclusively through one of the three dominant hubs. This
hub-and-spoke architecture has a direct implication for systemic
risk: a shock to any of the three central nodes would propagate
immediately to virtually all other countries in the network, with no
alternative routing available for peripheral economies.

\begin{table}[H]\par\medskip
\scalebox{0.85}{
\centering
\caption{Concentration measures, international trade network
(WIOD 2014, $\theta = 0.005$).}
\label{tab:trade_results}
\begin{tabular}{lc}\hline
Measure & Value \\\hline
HHI  & 0.1126 \\
Gini & 0.7256 \\
NCI  & 0.2647 \\\hline
\end{tabular}
}
\end{table}

The NCI of $0.2647$ is again approximately $2.4$ times 
the HHI of $0.1126$, consistent with the amplification 
ratio observed in the production network application. 
While this numerical similarity may reflect common 
features of WIOD-based weighting schemes rather than 
a structural invariant, it confirms that the NCI 
systematically detects a dimension of concentration 
that weight-based measures alone cannot capture, 
regardless of the underlying network topology.

Figure~\ref{fig:nci_theta_trade} shows the threshold sensitivity.
The NCI curve is considerably steeper than in the production network
case: it starts above $0.9$ at $\theta = 0.0005$, drops sharply
through the HHI benchmark at $\theta \approx 0.01$, and collapses
toward zero for $\theta \gtrsim 0.015$. This rapid collapse reflects
the star topology: once the threshold exceeds the bilateral
coefficients of hub-to-periphery links, the network becomes
essentially disconnected and the NCI loses its signal. The narrow
window $\theta \in [0.001, 0.010]$ over which
$\mathrm{NCI} > \mathrm{HHI}$ is therefore precisely the range in
which the hub structure is captured by the adjacency matrix --- a
finding that corroborates the choice of $\theta = 0.005$ as the
informative baseline.

\begin{figure}[H]\par\medskip
    \centering
    \includegraphics[width=0.6\linewidth]{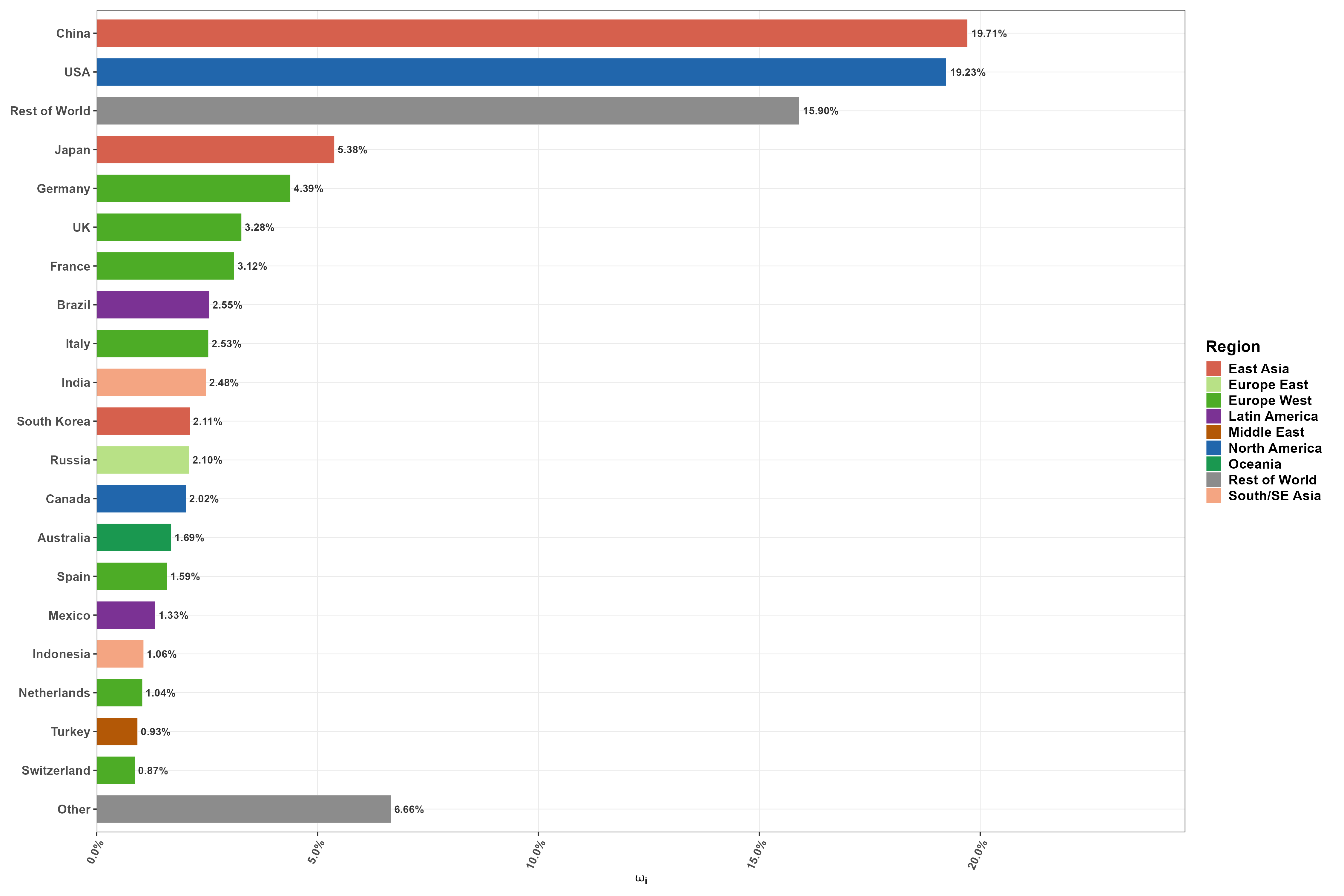}
    \caption{Country weights $\omega_c$, top-20 economies plus
             aggregate of remaining countries (Other), ranked
             by decreasing value.}
    \label{fig:country_weights}
\end{figure}

\begin{figure}[H]\par\medskip
    \centering
    \includegraphics[width=0.9\linewidth]{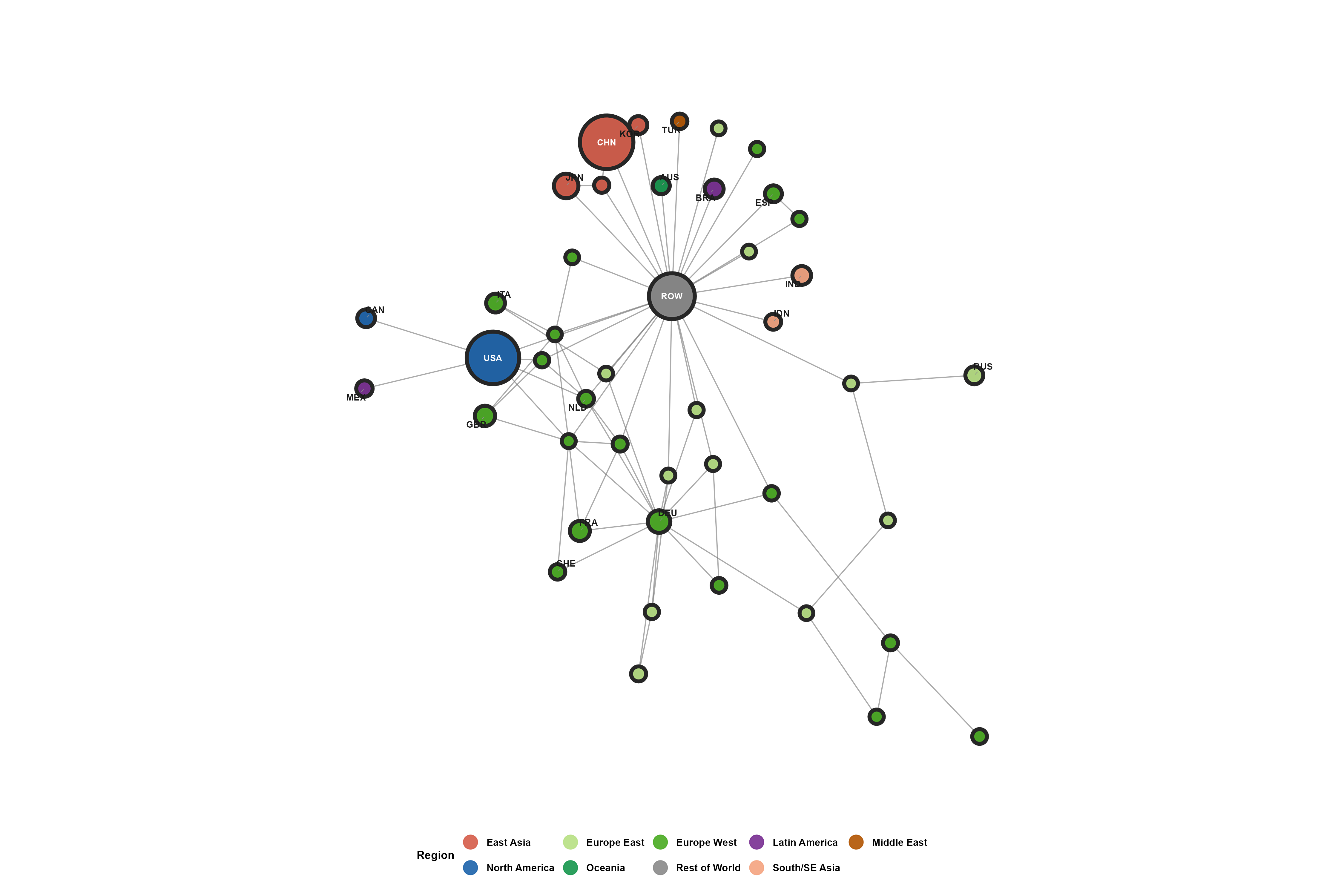}
    \caption{Symmetric international trade network, WIOD 2014
             ($\theta = 0.005$). }
    \label{fig:trade_network}
\end{figure}

\begin{figure}[H]\par\medskip
    \centering
    \includegraphics[width=0.6\linewidth]{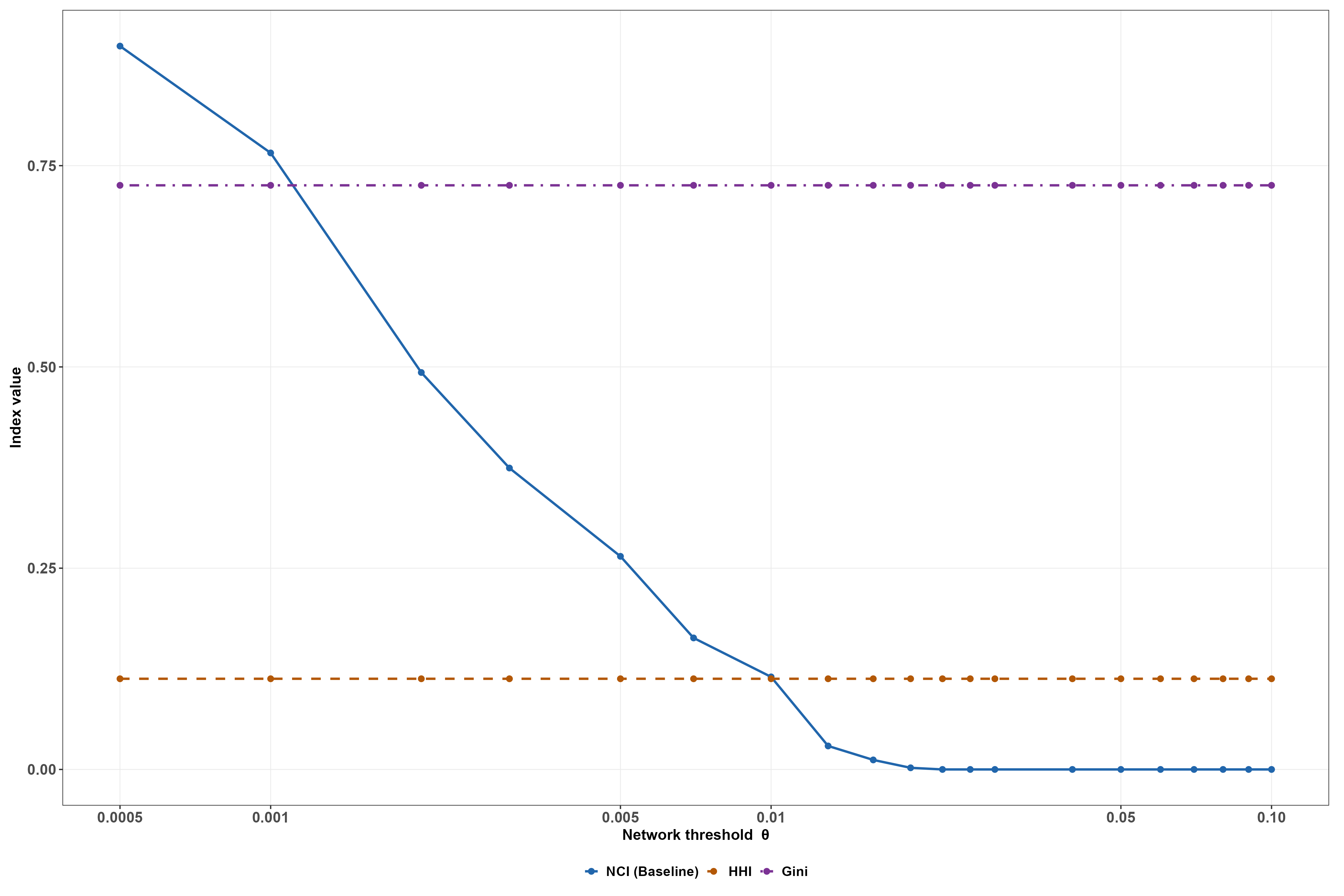}
    \caption{NCI, HHI, and Gini as functions of $\theta$
             (log scale). International trade network, WIOD 2014.}
    \label{fig:nci_theta_trade}
\end{figure}
\section{Empirical Application: Equity Dependence Network}
\label{sec:empirical_finance}

This section illustrates the empirical relevance of the NCI in a
financial market context. We construct a dependence network for the
twenty largest constituents of the S\&P~500 index by market
capitalisation, using daily adjusted closing prices downloaded from
Yahoo Finance via the \texttt{quantmod} \citep{ryan2020package} package in \textsf{R}. The
sample covers the period from 1 January 2015 to 31 December 2025,
yielding approximately 2{,}764 daily log-return observations per
asset after alignment and removal of missing values\footnote{Data
are freely available through Yahoo Finance
(\href{https://finance.yahoo.com}{ Yahoo Finance}). Adjusted prices account for
dividends and stock splits.}. The twenty stocks --- Apple (AAPL),
Microsoft (MSFT), NVIDIA (NVDA), Amazon (AMZN), Alphabet (GOOGL),
Meta (META), Berkshire Hathaway (BRK-B), Eli Lilly (LLY), Broadcom
(AVGO), Tesla (TSLA), Walmart (WMT), JPMorgan Chase (JPM), Visa (V),
ExxonMobil (XOM), UnitedHealth (UNH), Oracle (ORCL), Mastercard
(MA), Costco (COST), Home Depot (HD), and Procter \& Gamble (PG) ---
span eight GICS sectors: Technology, Communication Services, Consumer
Discretionary, Consumer Staples, Financials, Health Care, Energy, and
Industrials.
Node weights are defined as market capitalisation shares \citep{hua2019centrality}:
\begin{equation}
    \omega_i = \frac{\mathrm{MC}_i}{\sum_{j=1}^{N} \mathrm{MC}_j},
    \qquad \sum_{i=1}^{N} \omega_i = 1,
    \label{eq:mktcap_weights}
\end{equation}
where $\mathrm{MC}_i$ denotes the approximate market capitalisation of
firm $i$ as of end-2024 (in billions of USD) and $N = 20$. This
weighting scheme mirrors standard practice in equity index
construction and assigns greater importance to firms that dominate
the investable universe. Under Eq.~\eqref{eq:mktcap_weights}, the three
largest firms --- Apple (\$3{,}500bn), NVIDIA (\$3{,}300bn), and
Microsoft (\$3{,}100bn) --- collectively account for approximately
40\% of total weight, while the ten smallest firms together represent
less than 20\%. The resulting distribution is markedly right-skewed
($G = 0.4320$, $\mathrm{HHI} = 0.0879$), as illustrated in
Figure~\ref{fig:equity_weights}.

The adjacency matrix is constructed from the Minimum Spanning Tree
(MST) \citep{kruskal1956shortest,mantegna1999hierarchical,chow1968approximating}[see, among others] of the pairwise correlation structure of log-returns.
Let $r_{it} = \log(P_{it}/P_{i,t-1})$ denote the log-return of
asset $i$ at time $t$, and let $\rho_{ij}$ be the sample Pearson
correlation between $r_{i}$ and $r_{j}$ over the full sample.
Following \citet{mantegna1999hierarchical}, we convert correlations
into an ultrametric distance:
\begin{equation}
    d_{ij} = \sqrt{2\,(1 - \rho_{ij})},
    \qquad d_{ij} \in [0,\, 2],
    \label{eq:mantegna_dist}
\end{equation}
which satisfies $d_{ij} = 0$ if and only if $\rho_{ij} = 1$ and
$d_{ij} = 2$ if and only if $\rho_{ij} = -1$. We then extract the
MST of the complete weighted graph $(\mathcal{V}, \mathcal{E},
\mathbf{D})$ using Prim's algorithm, retaining the $N - 1 = 19$
edges that minimise total distance. The binary adjacency matrix is
defined as $A_{ij} = 1$ if edge $(i,j)$ belongs to the MST, and
$A_{ij} = 0$ otherwise, with $A_{ii} = 0$ throughout.

The MST approach differs fundamentally from the threshold-based
adjacency matrix used in the WIOD applications
(Section~\ref{sec:empirical}). Rather than applying an arbitrary
cutoff $\theta$, it extracts a parsimonious backbone of the
correlation structure that is invariant to monotone transformations
of the distance metric, avoids the over-connectivity problem
associated with dense correlation matrices, and yields a unique
connected graph for any set of distinct pairwise correlations.
This makes it particularly well suited to equity markets, where
nearly all pairs of stocks exhibit positive correlations and
threshold-based filtering would either produce a trivially complete
graph (low $\theta$) or an overly sparse, disconnected one (high
$\theta$).

Figure~\ref{fig:equity_weights} displays the distribution of
$\omega_i$ ranked by decreasing value and coloured by GICS sector.
Technology firms dominate: the four largest weights belong to Apple
(15.1\%), NVIDIA (14.2\%), and Microsoft (13.4\%), which together
account for approximately 43\% of total market capitalisation in the
sample. Communication Services (Alphabet, Meta) and Consumer
Discretionary (Amazon, Tesla) follow at a considerable distance.
The remaining twelve firms, drawn from Financials, Health Care,
Consumer Staples, and Energy, each contribute less than 5\% of total
weight. The distribution is considerably less concentrated than the
country-level distribution in the international trade application
($G = 0.726$, $\mathrm{HHI} = 0.113$, as reported in Table~\ref{tab:equity_results}), reflecting the more homogeneous
size distribution within the S\&P~500 mega-cap segment relative to
the global distribution of national income.

Figure~\ref{fig:equity_network} displays the MST-based dependence
network. Two hubs emerge prominently: Microsoft (MSFT), with five
direct connections, and Berkshire Hathaway (BRK-B), with four.
Microsoft sits at the centre of the Technology--Communication cluster,
linking Apple, NVIDIA, Alphabet, Amazon, Oracle, and Mastercard,
which reflects the co-movement of large-cap growth stocks driven by
common exposure to interest rates, earnings revisions, and AI-related
sentiment. Berkshire Hathaway anchors a second cluster of
non-technology value stocks --- JPMorgan, Visa, ExxonMobil,
UnitedHealth, and Home Depot --- consistent with its role as a
diversified holding company with significant exposure to the
financial, energy, and consumer cyclical sectors. The Consumer
Staples stocks (Walmart, Costco, Procter \& Gamble) form a peripheral
chain, connected to the rest of the network through a single path,
indicating lower co-movement with the core Technology and Financials
nodes. No stocks are isolated in the MST by construction, since
the algorithm always produces a connected spanning tree. The
concentration indices are reported in Table~\ref{tab:equity_results}.

\begin{table}[H]\par\medskip
\scalebox{0.85}{
\centering
\caption{Concentration measures, equity dependence network
(S\&P~500 top-20 by market cap, 2015--2024).}
\label{tab:equity_results}
\begin{tabular}{lc}\hline
Measure & Value \\\hline
HHI & 0.0879 \\
Gini                   & 0.4320 \\
NCI & 0.1939 \\\hline
\end{tabular}
}
\end{table}

The NCI of $0.1939$ is approximately $2.2$ times the HHI of $0.0879$,
a concentration amplification ratio broadly consistent with those
observed in the WIOD applications ($2.4\times$ for both the
production and trade networks). This amplification signals that
the most economically dominant firms --- Apple, NVIDIA, Microsoft ---
are simultaneously the most central in the return co-movement
network, so that market-cap concentration and network centrality
reinforce each other. Neither the HHI nor the Gini captures this
reinforcement, as both are blind to network topology: the HHI
reflects only the squared weight of individual firms, while the
Gini measures only the dispersion of the weight distribution.
The NCI of $0.1939$ implies that approximately 19.4\% of the
maximum achievable network concentration is realised in this
portfolio, a figure substantially higher than the 8.8\% suggested
by the HHI benchmark. From a portfolio risk perspective, this
gap is economically meaningful: an investor relying solely on the
HHI as a diversification gauge would underestimate by more than
half the extent to which concentration risk is amplified by the
correlation structure of returns. Figure~\ref{fig:rolling_nci} plots the three indices computed on
rolling windows of 252 trading days (approximately one calendar year)
with a step of 63 days (one quarter), over the period 2015--2025.
The shaded band around the NCI series reports the pointwise 95\%
bootstrap confidence interval \citep{efron1992bootstrap} obtained by resampling daily returns
with replacement within each window ($B = 500$ resamples), so that
any variation in the NCI across windows reflects genuine shifts in
the MST topology \citep{mantegna1999hierarchical} rather than
estimation noise. By construction, HHI and Gini are constant across
windows, since the market capitalisation weights $\omega_i$ are fixed
at their end-2024 values and do not vary with the estimation window;
they therefore carry no confidence band.

Four features of the NCI time series are noteworthy. First, the NCI
is persistently above the HHI throughout the sample and the 95\%
confidence band never overlaps the HHI line, confirming that the
concentration amplification result is statistically significant and
not specific to any particular sub-period or market regime. Second,
the NCI exhibits a clear upward trend from approximately 0.10 in
mid-2015 to a range of 0.16--0.21 by 2017--2020, which we attribute
to the progressive concentration of returns around a small number of
mega-cap Technology stocks as their share of the S\&P~500 index
grew over this period. Third, the NCI peaks at approximately $0.21$
in the period surrounding the COVID-19 shock of early 2020, when
return correlations surged across virtually all asset classes and
the MST contracted toward a star topology centred on the largest
Technology hubs. A one-sided Welch $t$-test confirms that the mean
NCI during the COVID-19 period (2020) is significantly higher than
in the pre-COVID baseline ($p < 0.01$), whereas the HHI remains
unchanged by construction --- a result consistent with the
flight-to-quality and momentum dynamics documented in the literature
\citep{onnela2003dynamic, tumminello2005tool}. Fourth, the NCI
subsequently stabilises in the $0.16$--$0.20$ range through
2021--2025, with a modest decline in the most recent quarters
possibly reflecting sector rotation away from Technology into
Financials, Energy, and Health Care; the narrowing of the confidence
band in this sub-period suggests that the correlation structure
became more stable relative to the high-volatility episodes of
2018--2020. These dynamics are entirely invisible to the HHI and
Gini, which remain flat by construction, underscoring the
informational content of the network dimension of the NCI.

\begin{figure}[H]\par\medskip
    \centering
    \includegraphics[width=0.5\linewidth]{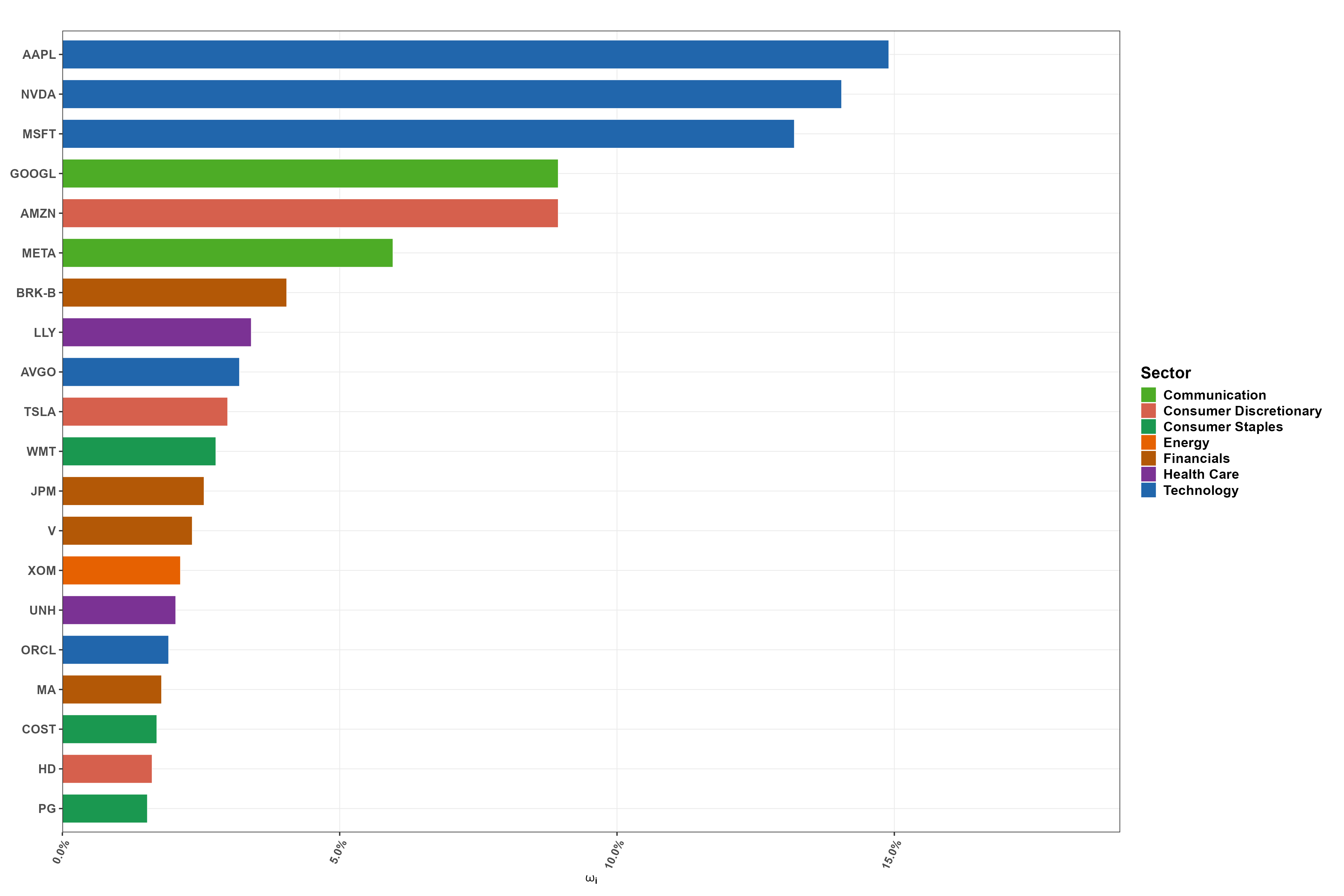}
    \caption{Market capitalisation weights $\omega_i$, S\&P~500
             top-20 stocks ranked by decreasing value.
             Colours denote GICS sector. End-2024 market caps
             (approximate, billions USD).}
    \label{fig:equity_weights}
\end{figure}

\begin{figure}[H]\par\medskip
    \centering
    \includegraphics[width=0.9\linewidth]{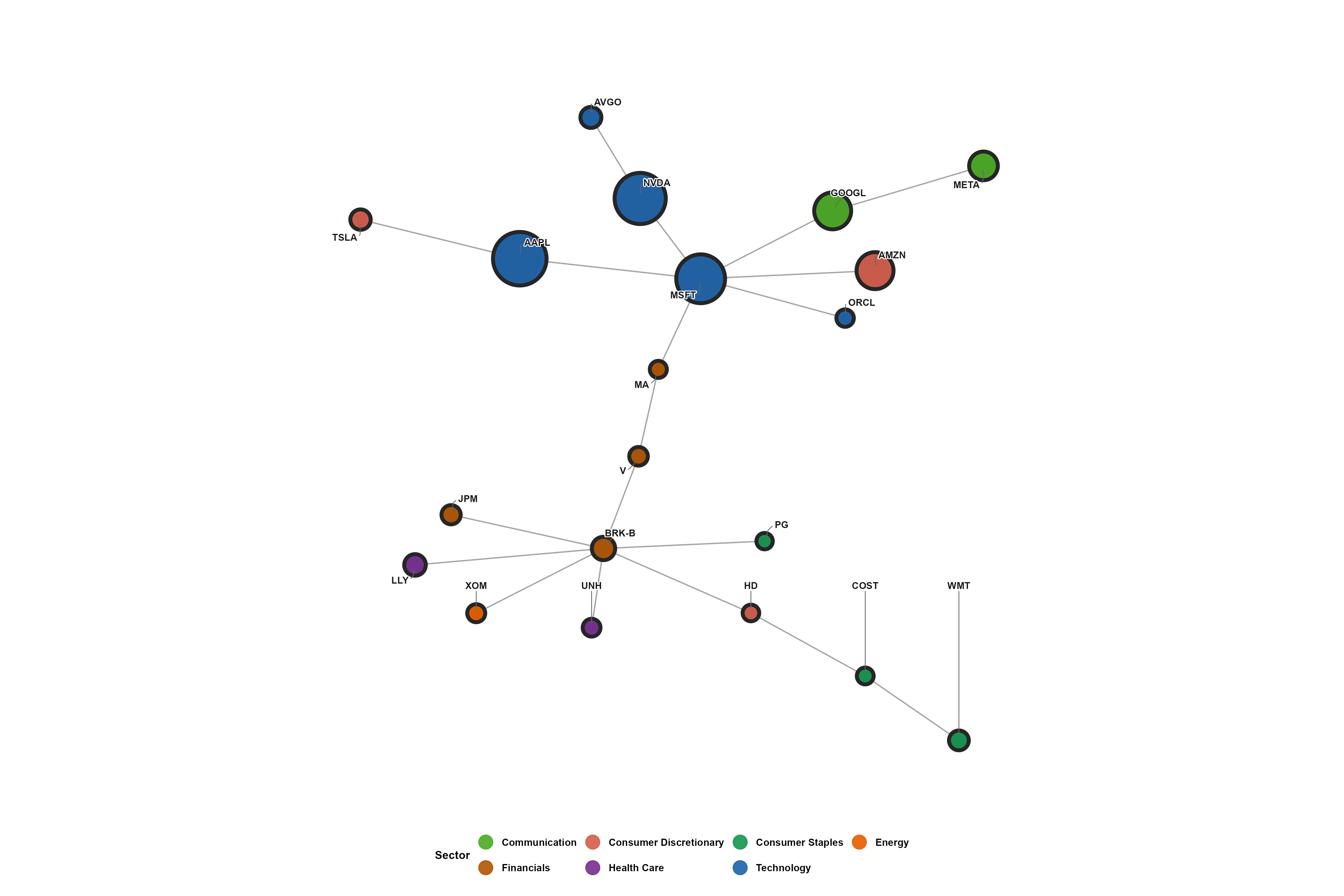}
    \caption{MST-based equity dependence network, S\&P~500 top-20,
             2015--2024.}
    \label{fig:equity_network}
\end{figure}
\begin{figure}[H]\par\medskip
    \centering
    \includegraphics[width=0.6\linewidth]{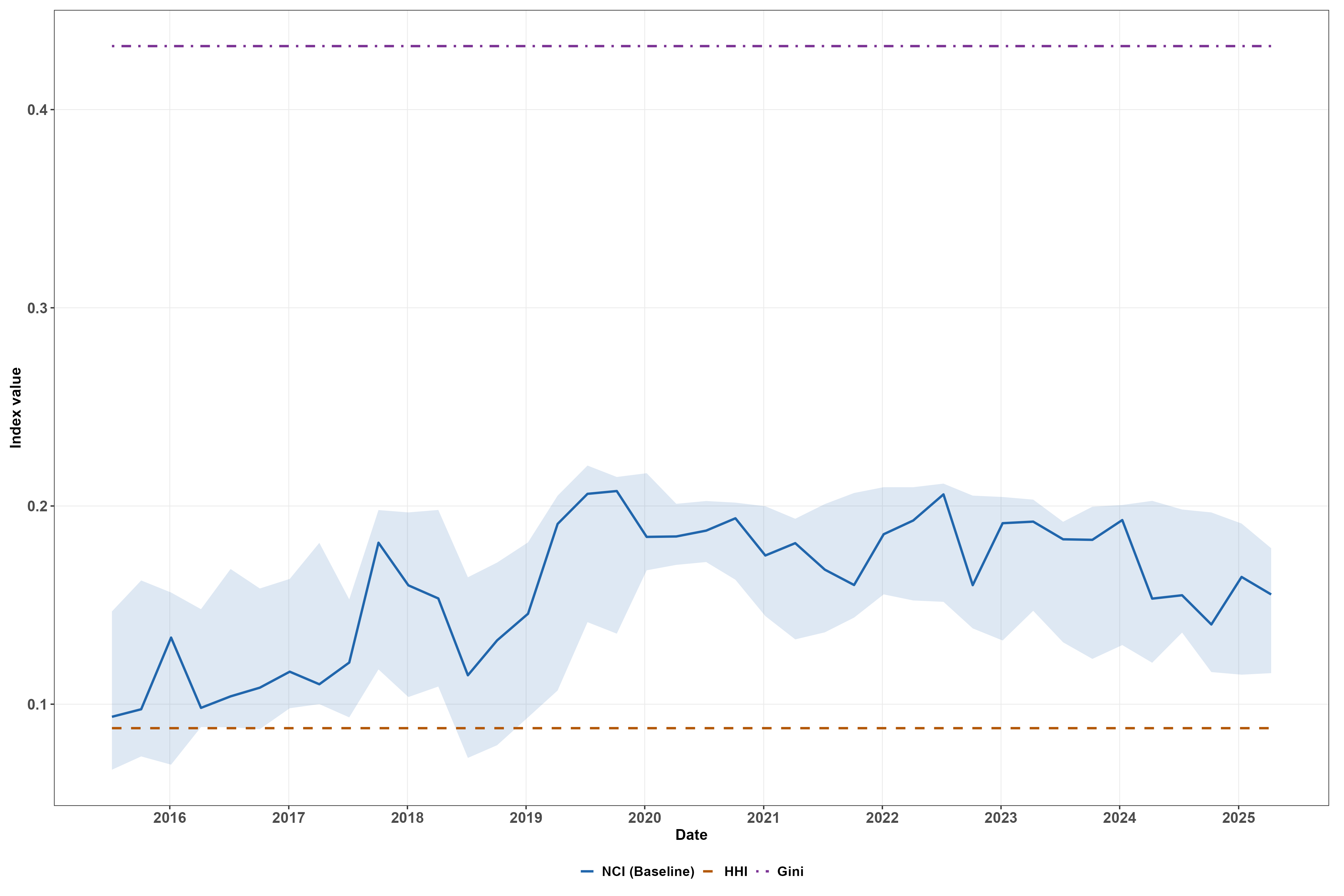}
    \caption{NCI, HHI, and Gini on rolling 252-day windows (step: 63 days),
S\&P~500 top-20, 2015--2025. The shaded band reports the
pointwise 95\% bootstrap confidence interval for the NCI
($B = 500$ resamples per window).}
    \label{fig:rolling_nci}
\end{figure}

\section{Conclusions}
\label{sec:conclusions}
This paper develops a unified framework for measuring concentration in weighted networks, within which the Network Concentration Index (NCI) arises as a baseline specification.
Traditional concentration measures such as the
Herfindahl--Hirschman Index depend solely on the dispersion of
weights and therefore ignore how economically important nodes are
positioned within the network. The concentration indices address this limitation
by combining node weights with the topology of the adjacency
structure, thereby providing a structural measure of concentration.

Within this framework, several normalization schemes are proposed in order to interpret
 the index relative to different benchmarks, including density
 adjustments, stochastic null models, and degree-constrained
 configurations.

Each normalization isolates a different dimension
of network organization. In particular, the degree-constrained
variant measures the alignment between node weights and network
connectivity conditional on the observed degree sequence,
capturing a structural dimension that is largely orthogonal to
the other variants.

Monte Carlo simulations confirm the theoretical properties of the
index and illustrate its behaviour under alternative network
structures. The results show that the NCI responds not only to the
dispersion of node weights but also to the topology of the
network. In particular, the index increases when highly weighted
nodes are strongly interconnected and decreases when weight is
concentrated on peripheral nodes. The simulations also reveal that
most variants convey similar information, while the
degree-constrained index captures a distinct structural dimension.

The empirical applications further demonstrate the usefulness of
the index in practice. In production networks, international trade
networks, and financial dependence networks, the NCI consistently
exceeds traditional concentration measures, indicating that 
structural concentration arises not only from unequal weights but
also from the central position of large nodes within the network.
These results highlight how network topology can amplify or
attenuate the effective concentration of economic activity.

Overall, the proposed framework provides a flexible approach for studying concentration in networked systems, with the NCI serving as a baseline measure within a broader family of topology-aware indices.
By separating the effects of
weight dispersion from the architecture of connections, the index
offers a richer perspective on how economic importance is
distributed and transmitted across networks. Future work may
extend this framework to dynamic networks, weighted adjacency
structures, and applications in systemic risk, industrial
organization, and international production networks.
\bibliographystyle{elsarticle-num-names}
\bibliography{reference}
\appendix

\section{Appendix: Properties of the family of Network Concentration Indices}
\label{sec:appendix}       
This appendix collects the main analytical properties of the family of
Network Concentration Indices introduced in the paper. We do not repeat
the definitions of the individual indices, but instead derive first the
properties that are shared by the unified class and then the additional
results that characterize specific variants.
\subsection{General properties of the unified family}
\label{sec:appendix_general}  

Consider the unified formulation
\begin{gather}
\Psi(w;M,B)=\frac{w^\top M w}{w^\top B w},
\qquad w^\top B w>0
\end{gather}
where $w=(w_1,\dots,w_N)^\top$ is a non-negative weight vector such that
$\sum_{i=1}^N w_i=1$, and $M,B$ are symmetric matrices with zero diagonal
and non-negative off-diagonal entries.

\begin{proposition}[Weighted-average representation]
\label{app:prop:wa}
For any admissible pair $(M,B)$ with $B=11^\top-I$,
\begin{gather}
\Psi(w;M,11^\top-I)
=
\frac{\sum_{i\neq j} w_i w_j M_{ij}}
     {\sum_{i\neq j} w_i w_j}
\end{gather}
Hence $\Psi$ is a weighted average of the pairwise interaction
intensities $M_{ij}$, with weights proportional to $w_i w_j$.
\end{proposition}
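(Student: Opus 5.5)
The plan is to reduce the statement to Proposition~\ref{prop:wavg}, which already gives the identity for $M$ with zero diagonal and non-negative off-diagonal entries. Since the appendix assumes exactly these conditions on $M$, the argument is essentially a restatement within the unified notation. For completeness, I would still verify the two ingredients directly: one for the numerator and one for the denominator.

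For the numerator, I expand the quadratic form as
\begin{gather}
w^\top M w=\sum_{i,j} w_i w_j M_{ij}.
\end{gather}
The zero-diagonal assumption $M_{ii}=0$ removes the terms with $i=j$, leaving $\sum_{i\neq j} w_iw_jM_{ij}$. For the denominator, with $B=11^\top-I$ the same expansion gives $B_{ij}=1$ for $i\neq j$ and $B_{ii}=0$, so
\begin{gather}
w^\top Bw=\sum_{i\neq j}w_iw_j.
\end{gather}
Equivalently, $(\sum_i w_i)^2-\sum_i w_i^2=1-\sum_i w_i^2$, which is the computation already used in Proposition~\ref{prop:norm}. Taking the ratio, under the standing admissibility condition $w^\top Bw>0$, yields the displayed formula.

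For the interpretive claim, I define $\pi_{ij}=w_iw_j/\sum_{k\neq l}w_kw_l$ for ordered pairs $i\neq j$. These satisfy $\pi_{ij}\ge 0$ because $w$ is non-negative, and $\sum_{i\neq j}\pi_{ij}=1$. Hence $\Psi=\sum_{i\neq j}\pi_{ij}M_{ij}$ is a convex combination of the $M_{ij}$, with weights proportional to $w_iw_j$.

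There is no real obstacle here. The only point needing care is that the denominator is strictly positive, so that the weights $\pi_{ij}$ are well defined. This holds whenever at least two components of $w$ are positive, which is exactly the admissibility condition $w^\top Bw>0$ in the unified formulation. When all weight sits on one node, the index is excluded, consistent with the convention discussed after \eqref{eq:NCI_def}.
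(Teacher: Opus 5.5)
Your proposal is correct and follows the paper's proof. Like the paper, you expand the numerator using $M_{ii}=0$, identify $w^\top(\mathbf{1}\mathbf{1}^\top-I)w$ with $\sum_{i\neq j}w_iw_j$, and take the ratio; your explicit convex weights $\pi_{ij}$ and your check that the denominator is positive are small clarifications the paper leaves implicit.
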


\begin{proof}
Since $M_{ii}=0$,
\begin{gather}
w^\top M w=\sum_{i\neq j} w_i w_j M_{ij}
\end{gather}
Moreover,
\begin{gather}
w^\top (11^\top-I) w
= 1-\sum_i w_i^2
= \sum_{i\neq j} w_i w_j
\end{gather}
Substituting into $\Psi$ yields the result.
\end{proof}

\begin{proposition}[Permutation invariance]
\label{app:prop:perm}
Let $P$ be a permutation matrix. Then
\begin{gather} 
\Psi(Pw;PMP^\top,PBP^\top)=\Psi(w;M,B)
\end{gather}
\end{proposition}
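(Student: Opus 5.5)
The argument reduces to the orthogonality of permutation matrices, $P^\top P = PP^\top = I$. I will apply this identity separately to the numerator and the denominator of $\Psi$, and then check that the permuted triple is still admissible, so that the index is well defined on both sides.

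\textbf{Numerator.} Expanding directly,
\begin{gather}
(Pw)^\top (PMP^\top)(Pw) = w^\top P^\top P M P^\top P w = w^\top M w .
\end{gather}

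\textbf{Denominator.} The same computation gives
\begin{gather}
(Pw)^\top (PBP^\top)(Pw) = w^\top B w .
\end{gather}
In particular the denominator stays strictly positive, since $w^\top B w>0$ by assumption. Taking the ratio of the two displays gives the claimed equality.

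\textbf{Admissibility.} I also need $(Pw, PMP^\top, PBP^\top)$ to satisfy the standing conditions of the appendix.
\begin{itemize}
\item $Pw$ is non-negative, because $P$ only reorders entries.
\item $\mathbf{1}^\top Pw = (P^\top\mathbf{1})^\top w = \mathbf{1}^\top w = 1$, since $P^\top\mathbf{1}=\mathbf{1}$.
\item $PMP^\top$ is symmetric, because $(PMP^\top)^\top = PM^\top P^\top = PMP^\top$.
\item Writing $P e_i = e_{\sigma(i)}$, we have $(PMP^\top)_{\sigma(i)\sigma(j)} = M_{ij}$. Conjugation therefore only relabels entries, so the zero diagonal and the non-negativity of the off-diagonal entries are preserved.
\end{itemize}
The same holds for $B$. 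In the baseline case $B=\mathbf{1}\mathbf{1}^\top-I$ the benchmark is itself invariant, since $P\mathbf{1}\mathbf{1}^\top P^\top - PP^\top = \mathbf{1}\mathbf{1}^\top - I$. Hence relabelling nodes only requires relabelling the adjacency matrix.

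I do not expect any real obstacle. The only care needed is bookkeeping: the orthogonality identity must be applied on the correct side of the quadratic form, and the admissibility conditions must be verified so that $\Psi$ is well defined at the permuted arguments.
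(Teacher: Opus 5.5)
Your proof is correct and follows the same route as the paper. It uses $P^\top P = I$ inside each quadratic form to obtain $(Pw)^\top(PMP^\top)(Pw)=w^\top M w$, does the same for $B$, and takes the ratio. Your extra check that the permuted triple stays admissible, and that $\mathbf{1}\mathbf{1}^\top - I$ is fixed under conjugation by $P$, is sound bookkeeping that the paper leaves implicit.
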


\begin{proof}
Using $P^\top P=I$,
\begin{gather}
(Pw)^\top (PMP^\top)(Pw)=w^\top M w,
\qquad
(Pw)^\top (PBP^\top)(Pw)=w^\top B w
\end{gather}
Taking the ratio gives the claim.
\end{proof}

\begin{proposition}[Nonnegativity]
\label{app:prop:nonneg}
Under the maintained assumptions on $M$ and $B$,
\begin{gather}
\Psi(w;M,B)\ge 0
\end{gather}
\end{proposition}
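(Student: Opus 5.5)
The plan is to show separately that the numerator is nonnegative and the denominator is strictly positive, and then take the ratio.

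\textbf{Numerator.} Because $M$ has zero diagonal, the quadratic form expands exactly as in the proof of Proposition~\ref{app:prop:wa}:
\begin{gather}
w^\top M w=\sum_{i\neq j} w_i w_j M_{ij}.
\end{gather}
Under the maintained assumptions, $w_i\ge 0$ for every $i$ and $M_{ij}\ge 0$ for $i\neq j$. Each summand $w_iw_jM_{ij}$ is therefore a product of nonnegative reals, so the sum is nonnegative. This uses only entrywise nonnegativity; no positive semidefiniteness of $M$ is needed, and indeed none is assumed, since a zero-diagonal matrix is generally indefinite.

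\textbf{Denominator.} The admissibility condition $w^\top B w>0$ is built into the definition of $\Psi$. The same expansion also gives $w^\top B w=\sum_{i\neq j}w_iw_jB_{ij}\ge 0$ on its own, so the standing assumption only serves to exclude the zero case.

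\textbf{Conclusion and main obstacle.} The ratio of a nonnegative number to a strictly positive one is nonnegative, which gives $\Psi(w;M,B)\ge 0$. The only delicate point is the definitional one: $\Psi$ must be well defined. For degenerate $w$, for instance when all weight sits on one node, the denominator vanishes. I would state explicitly that such $w$ lie outside the admissible set, or that the paper's convention $\psi=0$ applies there, which keeps the inequality valid. No real technical obstacle is expected.
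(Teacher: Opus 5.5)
Your proof is correct and follows the same route as the paper: $w^\top M w\ge 0$ because $w_i\ge 0$ and the off-diagonal entries of $M$ are nonnegative, and the standing assumption $w^\top B w>0$ then makes the ratio nonnegative. Your extra remarks are accurate but not needed for the argument; they concern the degenerate single-node case and the fact that no semidefiniteness of $M$ is required.
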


\begin{proof}
Since $w_i\ge 0$ and $M_{ij}\ge 0$ for $i\neq j$, we have $w^\top M w\ge 0$.
As $w^\top B w>0$ by assumption, it follows that $\Psi(w;M,B)\ge 0$.
\end{proof}

\begin{proposition}[Homogeneity]
\label{app:prop:hom}
For any scalar $c>0$,
\begin{gather}
\Psi(w;cM,B)=c\,\Psi(w;M,B)
\end{gather}
\end{proposition}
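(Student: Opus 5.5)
The plan is to use bilinearity of the quadratic form in the interaction matrix, with the benchmark matrix $B$ left unchanged. First I will check that the left-hand side is well defined. For $c>0$, the matrix $cM$ is symmetric, has zero diagonal, and has non-negative off-diagonal entries whenever $M$ does, so the pair $(cM,B)$ is admissible under the maintained assumptions of this appendix. The denominator $w^\top B w$ does not involve $M$, so it stays strictly positive by hypothesis.

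Next I will compute the numerator directly:
\begin{gather}
w^\top (cM) w = \sum_{i,j} w_i\,(cM_{ij})\,w_j = c\sum_{i,j} w_i M_{ij} w_j = c\, w^\top M w ,
\end{gather}
which is just linearity of matrix multiplication in its middle factor. Dividing by the unchanged denominator $w^\top B w>0$ gives
\begin{gather}
\Psi(w;cM,B)=\frac{c\,w^\top M w}{w^\top B w}=c\,\Psi(w;M,B).
\end{gather}

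There is no real obstacle. The only point needing care is the admissibility check, that is, confirming $cM$ stays in the class of matrices for which $\Psi$ is defined. That check is why the hypothesis $c>0$ is used: a negative $c$ would break non-negativity of the entries, although the algebraic identity would still hold. I will also note the immediate consequences. The index is unit-free only up to rescaling of the interaction weights. For instance, the weighted index $\psi^{(W)}$ scales proportionally if all $\gamma_{ij}$ are multiplied by a common factor. Combined with Proposition~\ref{app:prop:wa}, this means $\Psi(w;cM,\mathbf{1}\mathbf{1}^\top-I)$ is the $w_iw_j$-weighted average of the entries $cM_{ij}$.
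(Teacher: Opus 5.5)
Your proposal is correct and follows the paper's proof: factor the scalar out of the numerator, $w^\top (cM) w = c\,w^\top M w$, and divide by the unchanged denominator $w^\top B w>0$. Your admissibility check on $cM$ is a small addition the paper leaves implicit, but the argument is the same.
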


\begin{proof}
Immediate from
\begin{gather}
\Psi(w;cM,B)=\frac{w^\top (cM) w}{w^\top B w}
=c\,\frac{w^\top M w}{w^\top B w}.
\end{gather}
\end{proof}

\subsection{Properties of the baseline-normalized family}
\label{sec:appendix_baseline}
Now consider the family of indices with complete-network benchmark
\begin{gather}
\psi^{(M)}(w)=\frac{w^\top M w}{1-\sum_{i=1}^N w_i^2}
\end{gather}
which corresponds to $B=11^\top-I$.

\begin{proposition}[Relation to the Herfindahl--Hirschman Index]
\label{app:prop:hhi}
The denominator of the baseline-normalized family satisfies
\begin{gather}
1-\sum_{i=1}^N w_i^2 = 1-\mathrm{HHI}(w),
\end{gather}
where $\mathrm{HHI}(w)=\sum_{i=1}^N w_i^2$.
\end{proposition}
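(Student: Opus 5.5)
This is a direct identity, so the plan is simply to unfold definitions; nothing from the earlier propositions is needed beyond the normalization $\sum_i w_i = 1$.

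The denominator of the baseline-normalized family is written as $1-\sum_{i=1}^N w_i^2$. The Herfindahl--Hirschman Index is defined in Section~\ref{sec:classical_concentration} as $\mathrm{HHI}(w)=\sum_{i=1}^N w_i^2$. Substituting this definition gives the claim immediately.

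To make the link to the benchmark matrix explicit, I will also show the equivalent route through $B=\mathbf{1}\mathbf{1}^\top-I$, as in Proposition~\ref{prop:norm} and equation~\eqref{eq:Lambda_simplify}. First expand
\begin{gather}
w^\top(\mathbf{1}\mathbf{1}^\top-I)w=(\mathbf{1}^\top w)^2-w^\top w=\Bigl(\sum_i w_i\Bigr)^2-\sum_i w_i^2 .
\end{gather}
Then use $\sum_i w_i=1$ to reduce this to $1-\sum_i w_i^2=1-\mathrm{HHI}(w)$.

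As a corollary I will note that $1/N\le \mathrm{HHI}\le 1$. This follows from Cauchy--Schwarz for the lower bound and from $w_i\in[0,1]$ for the upper bound. Consequently the denominator lies in $[0,1-1/N]$, and it vanishes only at a vertex of the simplex, which is exactly the degenerate case already excluded in Section~\ref{sec:NCI_definition}.

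There is no real obstacle here. The only point requiring care is that the identity depends on $\sum_i w_i=1$. For unnormalized weights the denominator is instead $(\sum_i w_i)^2-\sum_i w_i^2$, which equals $(\sum_i w_i)^2\bigl(1-\mathrm{HHI}(w/\sum_i w_i)\bigr)$, so the statement should be read on the simplex.
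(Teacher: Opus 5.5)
Your proposal is correct and is essentially the paper's argument: the paper's proof is the one line ``immediate from the definition of the Herfindahl--Hirschman Index,'' and your detour through $w^\top(\mathbf{1}\mathbf{1}^\top-I)w$ and the bounds $1/N\le\mathrm{HHI}\le 1$ are accurate extras. One minor point: the displayed identity is pure substitution and holds without any normalization, so $\sum_i w_i=1$ is needed only to identify the benchmark form $w^\top(\mathbf{1}\mathbf{1}^\top-I)w$ with $1-\sum_i w_i^2$.
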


\begin{proof}
Immediate from the definition of the Herfindahl--Hirschman Index.
\end{proof}

\begin{proposition}[Bounds under binary interaction matrices]
\label{app:prop:bounds}
If $M=A$ is a binary symmetric adjacency matrix with zero diagonal, then
\begin{gather}
0\le \psi^{(A)}(w)\le 1
\end{gather}
\end{proposition}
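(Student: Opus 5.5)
We argue through the weighted-average representation. Since $A$ is symmetric with zero diagonal and entries in $\{0,1\}$, it meets the standing assumptions of the appendix. Assume the index is well defined, i.e.\ $1-\sum_i w_i^2>0$; equivalently, at least two components of $w$ are strictly positive. Proposition~\ref{app:prop:wa}, with $M=A$, then gives
\begin{gather}
\psi^{(A)}(w)=\frac{\sum_{i\neq j} w_i w_j A_{ij}}{\sum_{i\neq j} w_i w_j}.
\end{gather}
Both bounds follow by comparing numerator and denominator termwise.

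\textbf{Lower bound.} Every summand $w_i w_j A_{ij}$ in the numerator is nonnegative, because $w_i\ge 0$ and $A_{ij}\in\{0,1\}$. The denominator $\sum_{i\neq j} w_i w_j = 1-\sum_i w_i^2$ is strictly positive by assumption. Hence $\psi^{(A)}(w)\ge 0$; this is also the special case of Proposition~\ref{app:prop:nonneg}.

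\textbf{Upper bound.} For $i\neq j$ we have $A_{ij}\le 1$, so $w_i w_j A_{ij}\le w_i w_j$. Summing over ordered pairs $i\neq j$ gives $w^\top A w\le \sum_{i\neq j} w_i w_j = 1-\sum_i w_i^2$, and dividing by the positive denominator yields $\psi^{(A)}(w)\le 1$. Equivalently, $\psi^{(A)}(w)$ is a convex combination of the values $A_{ij}\in\{0,1\}$ with weights $w_iw_j/\sum_{k\neq l}w_kw_l$, so it lies in $[0,1]$.

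The only delicate point is the degenerate case $\sum_i w_i^2=1$, where all weight sits on one node and the ratio is undefined. The statement is understood under the standing condition $\Lambda>0$, or else with the convention $\psi=0$, which still lies in $[0,1]$. One can also note when the bounds are attained. Equality $\psi^{(A)}(w)=1$ holds exactly when $A_{ij}=1$ for every pair with $w_iw_j>0$, i.e.\ the support of $w$ induces a complete subgraph (for instance $A=\mathbf{1}\mathbf{1}^\top-I$, as in Proposition~\ref{prop:norm}). Equality $\psi^{(A)}(w)=0$ holds when no two positively weighted nodes are adjacent.
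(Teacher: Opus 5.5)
Your proof is correct and follows essentially the same route as the paper: nonnegativity from the nonnegativity of $w$ and $A$ (Proposition~\ref{app:prop:nonneg}), and the upper bound from $A_{ij}\le 1$ giving $w^\top A w\le\sum_{i\neq j}w_iw_j=1-\sum_i w_i^2$. Your treatment of the degenerate case $\sum_i w_i^2=1$ and your characterization of when the bounds are attained are correct additions that the paper does not spell out.
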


\begin{proof}
Non-negativity follows from Proposition~\ref{app:prop:nonneg}. Since
$A_{ij}\le 1$ for all $i\neq j$,
\begin{gather}
w^\top A w=\sum_{i\neq j} w_i w_j A_{ij}
\le
\sum_{i\neq j} w_i w_j
=
1-\sum_i w_i^2
\end{gather}
Dividing by the denominator gives the upper bound.
\end{proof}

\begin{proposition}[Equal-weight benchmark]
\label{app:prop:eq}
If $w_i=1/N$ for all $i$, then for a binary adjacency matrix $A$,
\begin{gather}
\psi^{(A)}(w)=\frac{2|E|}{N(N-1)}=\delta(A)
\end{gather}
that is, the index coincides with network density.
\end{proposition}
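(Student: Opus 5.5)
The plan is to substitute $w_i = 1/N$ directly into the weighted-average representation of Proposition~\ref{app:prop:wa} (equivalently Proposition~\ref{prop:wavg}). With $M=A$ and equal weights, every product $w_i w_j$ equals the constant $1/N^2$, so it cancels between numerator and denominator:
\begin{gather}
\psi^{(A)}(w)=\frac{\sum_{i\neq j} N^{-2} A_{ij}}{\sum_{i\neq j} N^{-2}}
=\frac{\sum_{i\neq j} A_{ij}}{N(N-1)} .
\end{gather}
This uses only that the number of ordered pairs $(i,j)$ with $i\neq j$ is $N(N-1)$.

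I will also verify the denominator directly as a check. Proposition~\ref{app:prop:hhi} gives $1-\mathrm{HHI}(w)=1-N\cdot N^{-2}=(N-1)/N$, and the numerator is $w^\top A w = N^{-2}\sum_{i\neq j}A_{ij}$. Their ratio reproduces the display above.

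Next I identify the numerator. Since $A$ is binary, symmetric and has zero diagonal, each undirected edge $\{i,j\}\in E$ contributes exactly twice to $\sum_{i\neq j}A_{ij}$, once as $(i,j)$ and once as $(j,i)$. Hence $\sum_{i\neq j}A_{ij}=2|E|$. This gives $\psi^{(A)}(w)=2|E|/[N(N-1)]$, and this is $\delta(A)$ by definition~\eqref{densit}.

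The only point needing care is that the counting of edges versus ordered pairs be consistent. The density in \eqref{densit} is defined over ordered pairs, and so is $|E|$ doubled; I will state this explicitly. I will also note that the result requires $N\ge 2$, which ensures $1-\mathrm{HHI}=(N-1)/N>0$. Finally, the statement is consistent with Proposition~\ref{prop:assort}: with equal weights, the ratio $\mathbb{E}[w_iw_j\mid A_{ij}=1]/\mathbb{E}[w_iw_j]$ equals one.
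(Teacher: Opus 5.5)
Your proposal is correct and matches the paper's proof, which computes $w^\top A w = 2|E|/N^2$ and $1-\sum_i w_i^2=(N-1)/N$ directly and takes their ratio; this is exactly your ``check'', and routing it through the weighted-average representation is the same computation with the constant $1/N^2$ cancelled. Your explicit remarks that each undirected edge is counted twice in $\sum_{i\neq j}A_{ij}$ and that $N\ge 2$ is needed are useful details the paper leaves implicit.
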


\begin{proof}
If $w_i=1/N$ for all $i$,
\begin{gather}
w^\top A w = \sum_{i\neq j}\frac{1}{N^2}A_{ij}
=\frac{2|E|}{N^2}
\end{gather}
while
\begin{gather}
1-\sum_i w_i^2 = 1-\frac{1}{N}=\frac{N-1}{N}
\end{gather}
Hence
\begin{gather}
\psi^{(A)}(w)
=
\frac{2|E|/N^2}{(N-1)/N}
=
\frac{2|E|}{N(N-1)}
\end{gather}
\end{proof}

\begin{proposition}[Random-network benchmark]
\label{app:prop:ER}
If $A$ is an \cite{erdHos1960evolution} random graph with link probability $p$,
then
\begin{gather}
\mathbb{E}\big[\psi^{(A)}(w)\big]=p
\end{gather}
\end{proposition}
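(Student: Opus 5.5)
The plan is to exploit that the random adjacency matrix enters the index only through its numerator, while the denominator $1-\sum_i w_i^2$ is a deterministic function of the fixed weight vector $w$. Throughout, I treat $w$ as non-random and independent of $A$, as in Proposition~\ref{prop:random}. I also assume at least two components of $w$ are positive, so that $1-\sum_i w_i^2>0$ and $\psi^{(A)}(w)$ is well defined for every realization of $A$. With these conventions, linearity of expectation lets me pull the constant denominator outside, so that
\begin{gather}
\mathbb{E}\big[\psi^{(A)}(w)\big]=\frac{\mathbb{E}[w^\top A w]}{1-\sum_i w_i^2}.
\end{gather}

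Next I expand the numerator entrywise. Since $A$ has zero diagonal, $w^\top A w=\sum_{i\neq j} w_i w_j A_{ij}$, exactly as in the proof of Proposition~\ref{app:prop:wa}. Each off-diagonal entry is Bernoulli$(p)$, so $\mathbb{E}[A_{ij}]=p$. Note that $A_{ij}=A_{ji}$ is the same variable for an unordered pair, but this does not matter: only marginal expectations enter, and independence across pairs is not needed for the mean. Hence
\begin{gather}
\mathbb{E}[w^\top A w]=p\sum_{i\neq j}w_iw_j .
\end{gather}

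Finally I invoke the identity $\sum_{i\neq j}w_iw_j=1-\sum_i w_i^2$, which uses $\sum_i w_i=1$ and was established in Propositions~\ref{prop:wavg} and~\ref{app:prop:wa}. The denominator then cancels and the expectation equals $p$. Equivalently, $\mathbb{E}[A]=p(\mathbf{1}\mathbf{1}^\top-I)$, so by linearity and the normalization of Proposition~\ref{prop:norm}, $\mathbb{E}[\psi^{(A)}(w)]=p\,\Psi(w;\mathbf{1}\mathbf{1}^\top-I,\mathbf{1}\mathbf{1}^\top-I)=p$.

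There is no real obstacle in this argument. The one point needing care is stating explicitly that $w$ is fixed and not a function of $A$. If the weights were allowed to depend on the realized network, the denominator could not be pulled out of the expectation, and the expectation of the numerator would no longer factor as $p$ times the weight products. The degenerate case in which all weight sits on a single node is excluded by assumption, or handled by the $\psi=0$ convention, under which the claim would fail. That case should therefore be ruled out in the hypothesis.
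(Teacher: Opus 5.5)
Your proposal is correct and follows the paper's proof essentially verbatim: expand $w^\top A w=\sum_{i\neq j}w_iw_jA_{ij}$, use $\mathbb{E}[A_{ij}]=p$ and the identity $\sum_{i\neq j}w_iw_j=1-\sum_i w_i^2$, then divide by the deterministic denominator. Your explicit hypotheses, that $w$ is fixed independently of $A$ and that the single-node degenerate case is excluded, are left implicit in the paper and are worth stating.
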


\begin{proof}
Since $\mathbb{E}[A_{ij}]=p$ for $i\neq j$,
\begin{gather}
\mathbb{E}[w^\top A w]
=
\sum_{i\neq j} w_i w_j \mathbb{E}[A_{ij}]
=
p\sum_{i\neq j} w_i w_j
=
p\Big(1-\sum_i w_i^2\Big)
\end{gather}
Dividing by the denominator yields the result.
\end{proof}
\subsection{Specific properties of individual variants}
\label{sec:appendix_variants} 

\begin{proposition}[Density-adjusted index]
\label{app:prop:dens}
For the density-adjusted index,
\begin{gather}
\psi^{(\mathrm{dens})}(w,A)
=
\frac{\psi(w,A)}{\delta(A)}
=
\frac{\mathbb{E}[w_i w_j\mid A_{ij}=1]}
       {\mathbb{E}[w_i w_j]}
\end{gather}
Hence it measures assortative connectivity with respect to node weights.
\end{proposition}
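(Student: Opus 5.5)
The proof is a direct computation from the definition of $\psi^{(\mathrm{dens})}$ in \eqref{Da}, combined with the decomposition of Proposition~\ref{prop:assort}. We assume, as there, that $A$ is a symmetric binary matrix with zero diagonal and $\sum_{i\neq j}A_{ij}>0$, so that $\delta(A)>0$. We also assume $1-\sum_i w_i^2>0$. Under these assumptions every quantity in the statement is well defined.

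\emph{First equality.} Factor the denominator of \eqref{Da} as the product of $1-\sum_i w_i^2$ and $\delta(A)$. Then
\begin{gather}
\psi^{(\mathrm{dens})}(w,A)=\frac{1}{\delta(A)}\cdot\frac{w^\top A w}{1-\sum_i w_i^2}=\frac{\psi(w,A)}{\delta(A)}
\end{gather}
by \eqref{eq:nci_baseline}. Nothing more is needed for this step.

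\emph{Second equality.} Invoke Proposition~\ref{prop:assort}, which gives
\begin{gather}
\psi(w,A)=\delta(A)\,\frac{\mathbb{E}[w_iw_j\mid A_{ij}=1]}{\mathbb{E}[w_iw_j]}.
\end{gather}
Dividing by $\delta(A)>0$ yields the claimed ratio. It is worth recalling the conventions behind the two averages:
\begin{gather}
\mathbb{E}[w_iw_j\mid A_{ij}=1]=\frac{\sum_{i\neq j}w_iw_jA_{ij}}{\sum_{i\neq j}A_{ij}},\qquad \mathbb{E}[w_iw_j]=\frac{\sum_{i\neq j}w_iw_j}{N(N-1)}.
\end{gather}
Here the expectation is with respect to the uniform distribution on ordered pairs $i\neq j$. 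With these conventions the identity can also be checked directly from Proposition~\ref{prop:wavg}: substituting both averages, the factors $\sum_{i\neq j}A_{ij}$ and $N(N-1)$ cancel against $\delta(A)$.

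The interpretive clause ("assortative connectivity") follows by reading off the ratio. The index exceeds one exactly when linked pairs carry a larger average weight product than pairs overall. It equals one in expectation-like benchmarks, for instance under equal weights, by Proposition~\ref{app:prop:eq}.

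The only step needing care is the well-definedness hypothesis. The density normalization breaks down when $A$ is empty ($\delta(A)=0$) or when all weight sits on one node, so the statement must be read under those nondegeneracy conditions, which we will state explicitly. Beyond that, there is no genuine obstacle; the result is a corollary of Proposition~\ref{prop:assort}.
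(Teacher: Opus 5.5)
Your proposal is correct and takes the same route as the paper, whose proof is the one-liner that the result ``follows from Proposition~\ref{prop:assort} by normalization with $\delta(A)$'': the first equality comes from factoring the denominator in \eqref{Da}, and the second is Proposition~\ref{prop:assort} divided by $\delta(A)$. The paper leaves the nondegeneracy hypotheses $\sum_{i\neq j}A_{ij}>0$ and $1-\sum_i w_i^2>0$ implicit, so stating them explicitly, as you do, is a useful addition.
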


\begin{proof}
Follows from Proposition~\ref{prop:assort} by normalization with $\delta(A)$.
\end{proof}

\begin{proposition}[Weighted index]
\label{app:prop:weighted}
For the weighted index,
\begin{gather}
\psi^{(W)}(w)=\frac{w^\top W w}{1-\sum_i w_i^2},
\qquad
\psi^{(cW)}(w)=c\,\psi^{(W)}(w), \quad c>0.
\end{gather}
\end{proposition}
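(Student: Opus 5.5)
The claim has two parts: the defining formula for $\psi^{(W)}$, and the scaling identity $\psi^{(cW)}(w)=c\,\psi^{(W)}(w)$ for $c>0$. Both follow from the unified formulation \eqref{eq:master_quadratic_nci} together with the general homogeneity result, Proposition~\ref{app:prop:hom}.

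\textbf{Step 1: identify $\psi^{(W)}$ as a member of the baseline-normalized family.} The defining expression is obtained by setting $M=W$ and $B=\mathbf{1}\mathbf{1}^\top-I$ in $\Psi(w;M,B)$. Under the standing assumption $\sum_i w_i=1$, Proposition~\ref{app:prop:hhi} (equivalently Proposition~\ref{prop:norm}) rewrites the denominator as
\[
w^\top(\mathbf{1}\mathbf{1}^\top-I)w = 1-\sum_i w_i^2 .
\]
This gives exactly $\psi^{(W)}(w)=w^\top W w/(1-\sum_i w_i^2)$. We assume, as elsewhere in the paper, that this denominator is positive, i.e.\ at least two weights are nonzero.

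\textbf{Step 2: check that $cW$ is still admissible.} For $c>0$, the matrix $cW$ is symmetric, has zero diagonal, and has nonnegative off-diagonal entries $c\gamma_{ij}$. Hence $\psi^{(cW)}$ is well defined, and it shares the same denominator as $\psi^{(W)}$.

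\textbf{Step 3: apply homogeneity.} Proposition~\ref{app:prop:hom} with $M=W$ and $B=\mathbf{1}\mathbf{1}^\top-I$ gives $\Psi(w;cW,B)=c\,\Psi(w;W,B)$. Directly, this is linearity of the quadratic form, $w^\top(cW)w=c\,w^\top W w$, while the benchmark matrix $B$ does not depend on $W$.

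There is no real obstacle. The only point needing care is confirming in Step 2 that the admissibility conditions are preserved under scaling, so that $\psi^{(cW)}$ is well defined. One may also remark that the restriction $c>0$ is what keeps $cW$ nonnegative, so that the nonnegativity result, Proposition~\ref{app:prop:nonneg}, continues to apply.
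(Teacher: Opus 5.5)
Your proof is correct and takes the same route as the paper, whose entire proof is that the result is immediate from the homogeneity property, Proposition~\ref{app:prop:hom}, with $B=\mathbf{1}\mathbf{1}^\top-I$; this is your Step~3. Your Steps~1--2, which rewrite the denominator and check that $cW$ remains admissible, make explicit what the paper leaves implicit.
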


\begin{proof}
Immediate from Proposition~\ref{app:prop:hom}.
\end{proof}

\begin{proposition}[Multi-layer index]
\label{app:prop:multi}
Let
\begin{gather}
A^{(\alpha)}=\sum_{\ell=1}^L \alpha_\ell A^{(\ell)},
\qquad
\alpha_\ell\ge 0,\ \sum_{\ell=1}^L \alpha_\ell=1.
\end{gather}
Then
\begin{gather}
\psi^{(\alpha)}(w)
=
\sum_{\ell=1}^L \alpha_\ell \psi^{(A^{(\ell)})}(w).
\end{gather}
\end{proposition}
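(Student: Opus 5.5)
The statement is the multi-layer decomposition (Proposition~\ref{app:prop:multi}), and the plan is to exploit linearity of the quadratic form in its matrix argument while holding the denominator fixed. All indices involved share the same baseline normalization $1-\sum_i w_i^2$, which depends only on $w$ and not on the interaction matrix. So the whole argument reduces to showing that the numerators decompose additively.

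\textbf{Step 1: admissibility of the aggregated matrix.} First I would check that $A^{(\alpha)}$ is itself an admissible interaction matrix in the sense of Section~\ref{sec:appendix_general}. Each $A^{(\ell)}$ is symmetric with zero diagonal and non-negative off-diagonal entries, and the coefficients satisfy $\alpha_\ell\ge 0$. A non-negative combination of such matrices keeps all three properties. Hence $\psi^{(\alpha)}(w)$ is well defined whenever $1-\sum_i w_i^2>0$, and the same standing assumption makes every layer-specific index $\psi^{(A^{(\ell)})}(w)$ well defined.

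\textbf{Step 2: additivity of the numerator.} For fixed $w$, the map $M\mapsto w^\top M w$ is linear, so
\begin{gather}
w^\top A^{(\alpha)} w=\sum_{\ell=1}^L \alpha_\ell\, w^\top A^{(\ell)} w .
\end{gather}
Equivalently, entrywise this reads $\sum_{i\neq j} w_i w_j \sum_\ell \alpha_\ell A^{(\ell)}_{ij}=\sum_\ell\alpha_\ell\sum_{i\neq j}w_iw_jA^{(\ell)}_{ij}$, which is just exchanging two finite sums.

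\textbf{Step 3: dividing through.} I would divide both sides by the common denominator $1-\sum_i w_i^2$ and recognize each term as $\psi^{(A^{(\ell)})}(w)$. This gives the claimed identity directly. Alternatively, one could cite Proposition~\ref{app:prop:hom} for each scaled term $\alpha_\ell A^{(\ell)}$ together with additivity of $\Psi$ in $M$. That route, however, would also need additivity in $M$, which is not stated earlier, so I would prove it inline as in Step 2.

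There is no genuine obstacle here. The only point requiring care is the zero-weight case $\alpha_\ell=0$, where homogeneity as stated needs $c>0; the linear-expansion route sidesteps this. It is also worth remarking why the constraint $\sum_\ell\alpha_\ell=1$ matters: it is not needed for the identity itself, but it guarantees that $\psi^{(\alpha)}$ is a convex combination of the layer indices. In the binary case this inherits the bounds $[0,1]$ from Proposition~\ref{app:prop:bounds}.
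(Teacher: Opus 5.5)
Your proof is correct and matches the paper's argument: the paper likewise expands $w^\top A^{(\alpha)} w=\sum_{\ell}\alpha_\ell\, w^\top A^{(\ell)} w$ by linearity and then divides by the common denominator $1-\sum_i w_i^2$. Your extra checks (admissibility of $A^{(\alpha)}$, and bypassing the $c>0$ restriction of Proposition~\ref{app:prop:hom} when some $\alpha_\ell=0$) are sound details that the paper leaves implicit.
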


\begin{proof}
By linearity, $w^\top A^{(\alpha)} w=\sum_{\ell} \alpha_\ell w^\top A^{(\ell)} w$.
Dividing by the common denominator yields the result.
\end{proof}

\begin{proposition}[Degree-constrained index]
\label{app:prop:deg}
Let
\begin{gather}
\Lambda(w;\mathcal{G}(d))=\max_{B\in\mathcal{G}(d)} w^\top B w.
\end{gather}
Then
\begin{gather}
0\le \psi^{(\mathrm{deg})}(w,A)\le 1,
\end{gather}
with equality $\psi^{(\mathrm{deg})}(w,A)=1$ if and only if $A$ attains
the maximum.
\end{proposition}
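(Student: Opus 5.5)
The argument rests on one observation: the observed matrix $A$ is itself a feasible point of the optimization defining the benchmark. Here $d$ is the degree sequence of $A$ and $\mathcal{G}(d)$ is the set of symmetric binary zero-diagonal matrices with that degree sequence. So $A\in\mathcal{G}(d)$, and the set is nonempty.

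Since $\mathcal{G}(d)$ is a finite set of $N\times N$ binary matrices, the maximum $\Lambda(w;\mathcal{G}(d))$ exists and is attained by some $B^\ast$. Because $A$ is feasible,
\begin{gather}
w^\top A w \le \max_{B\in\mathcal{G}(d)} w^\top B w = \Lambda(w;\mathcal{G}(d)).
\end{gather}

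\textbf{Nonnegativity.} Every $B\in\mathcal{G}(d)$ has nonnegative entries, so $w^\top A w\ge 0$, exactly as in Proposition~\ref{app:prop:nonneg}. For the ratio to be defined we also need $\Lambda(w;\mathcal{G}(d))>0$, and this is the step that needs care. I would make it an explicit admissibility condition, $w^\top B w>0$ for some $B\in\mathcal{G}(d)$, in line with the standing assumption $w^\top Bw>0$ of the unified family. If instead $\Lambda=0$, then $0\le w^\top A w\le \Lambda = 0$, so the numerator also vanishes; this degenerate case would be excluded or set to $\psi^{(\mathrm{deg})}=0$, following the same convention the paper uses when $\Lambda=0$ for the baseline index.

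\textbf{Upper bound.} With $\Lambda>0$, dividing the displayed inequality by $\Lambda$ gives $\psi^{(\mathrm{deg})}(w,A)\le 1$.

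\textbf{Equality.} $\psi^{(\mathrm{deg})}(w,A)=1$ holds if and only if $w^\top A w=\max_{B\in\mathcal{G}(d)} w^\top B w$. This is exactly the statement that $A$ is a maximizer, i.e. that $A$ attains the maximum, so both directions are immediate from the definition.

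The only real obstacle is the positivity of the denominator; everything else is feasibility plus the definition of a maximum.
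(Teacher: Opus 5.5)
Your proof is correct and follows the paper's argument: $A$ is feasible in $\mathcal{G}(d)$, which gives $0\le w^\top A w\le \Lambda(w;\mathcal{G}(d))$, and $\psi^{(\mathrm{deg})}=1$ exactly when $A$ is a maximizer. Your explicit treatment of $\Lambda(w;\mathcal{G}(d))>0$ is a real refinement that the paper's one-line proof glosses over, since positivity can fail (e.g.\ when $d=0$). One small correction: in that degenerate case you should exclude it rather than adopt the convention $\psi^{(\mathrm{deg})}=0$, because $A$ then trivially attains the maximum while $\psi^{(\mathrm{deg})}=0$, which breaks the ``if'' direction of the equality claim.
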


\begin{proof}
By construction $w^\top A w \le \Lambda(w;\mathcal{G}(d))$, and both terms
are nonnegative, implying the bounds. Equality holds iff $A$ is a maximizer.
\end{proof}

\end{document}